\newcounter{thm_counter} \newcounter{lem_counter}
\newcounter{ass_counter}
\newtheorem{theorem}[thm_counter]{Theorem}
\newtheorem{lemma}[lem_counter]{Lemma}
\newtheorem{assumption}[ass_counter]{Assumption}
\newcommand{\tp}{\tilde{\Phi}} \newcommand{\rca}{\color{red}}
\newcommand{\xlast}{x_{\mbox{\rm\scriptsize last}}}
\newcommand{\chisat}{\chi} 
\def\lassoinf{{\bf LASSO}$\infty$\,} \def\linf{{\bf L}$\infty$\,}
\def\ltwo{{\bf L}2\,} \def\dantzig{{\bf Dantzig}\,}
\def\ltwodantziginf{{\bf L}2{\bf Dantzig}$\infty$\,}
\newcommand{\fmax}{f_{\max}} 
 \def\sjwresolved#1{}
\def\jlresolved#1{}
\journal{Applied and Computational Harmonic Analysis}
\begin{document}

\begin{frontmatter}



\title{Robust Dequantized Compressive Sensing}


\tnotetext[label1]{Corresponding author.}  \address[label2]{Department
  of Computer Sciences, University of Wisconsin$-$Madison, 1210
  W. Dayton St., Madison, WI 53706-1685} \author{Ji
  Liu\corref{label1}\fnref{label2}} \ead{ji-liu@cs.wisc.edu}
\author{Stephen J. Wright\fnref{label2}} \ead{swright@cs.wisc.edu}

\begin{abstract}
We consider the reconstruction problem in compressed sensing in which
the observations are recorded in a finite number of bits. They may
thus contain quantization errors (from being rounded to the nearest
representable value) and saturation errors (from being outside the
range of representable values). Our formulation has an objective of
weighted $\ell_2$-$\ell_1$ type, along with constraints that account
explicitly for quantization and saturation errors, and is solved with
an augmented Lagrangian method. We prove a consistency result for the
recovered solution, stronger than those that have appeared to date in
the literature, showing in particular that asymptotic consistency can
be obtained without oversampling. We present extensive computational
comparisons with formulations proposed previously, and variants
thereof.
\end{abstract}

\begin{keyword}
compressive sensing, signal reconstruction, quantization,
optimization.


\end{keyword}

\end{frontmatter}


\section{Introduction}
\label{introduction}

This paper considers a compressive sensing (CS) system in which the
measurements are represented by a finite number of bits, which we
denote by $B$. By defining a quantization interval $\Delta>0$, and
setting $G:= 2^{B-1} \Delta$, we obtain the following values for
representable measurements:
\begin{equation} \label{eq:qcs.values}
-G+\frac{\Delta}{2}, -G+\frac{3\Delta}{2}, \dotsc,
-\frac{\Delta}{2},\frac{\Delta}{2}, \dotsc, G-\frac{\Delta}{2}.
\end{equation}
We assume in our model that actual measurements are recorded by
rounding to the nearest value in this set.  The recorded observations
thus contain (a) quantization errors, resulting from rounding of the
true observation to the nearest represented number, and (b) saturation
errors, when the true observation lies beyond the range of represented
values, namely, $[-G+\frac{\Delta}{2},G-\frac{\Delta}{2}]$. This setup
is seen in some compressive sensing hardware architectures \citep[see,
  for example,][]{Laska07, Tropp09, Romberg09, Tropp06, Duarte08}.

\sjwresolved{We should establish a consistent callout style for
  equation numbers. I vote to remove ``Eq.'' everywhere. What do you
  think? {\rca I have removed all ``Eq.''.}}

Given a sensing matrix $\Phi \in \mathbb{R}^{M\times N}$ and the
unknown vector $x$, the true observations (without noise) would be
$\Phi x$. We denote the recorded observations by the vector $y \in
\mathbb{R}^M$, whose components take on the values in
\eqref{eq:qcs.values}. We partition $\Phi$ into the following three
submatrices:
\begin{itemize}
\item The saturation parts $\bar{\Phi}_-$ and $\bar{\Phi}_+$, which
  correspond to those recorded measurements that are represented by
  $-G+{\Delta/2}$ or $G-{\Delta/2}$, respectively --- the two extreme
  values in \eqref{eq:qcs.values}. We denote the number of rows in
  these two matrices combined by $\bar{M}$.
\item The unsaturated part $\tp\in \mathbb{R}^{\tilde{M}\times N}$,
  which corresponds to the measurements that are rounded to
  non-extreme representable values.
 \end{itemize}


In some existing analyses \citep{Candes06,JacquesHF11}, the
quantization errors are treated as a random variables following an
i.i.d. uniform distribution in the range $[-{\Delta\over 2},
  {\Delta\over 2}]$. This assumption makes sense in many situations
(for example, image processing, audio/video processing), particularly
when the quantization interval $\Delta$ is tiny.  However, the
assumption of a uniform distribution may not be appropriate when
$\Delta$ is large, or when an inappropriate choice of saturation level
$G$ is made. In this paper, we assume a slightly weaker condition,
namely, that the quantization errors for non-saturated measurements
are independent random variables with zero expectation. (These random
variables are of course bounded uniformly by $\Delta/2$.)

 The state-of-the-art formulation to this
 problem~\citep[see][]{Laska11} is to combine the basis pursuit model
 with saturation constraints, as follows:
\begin{subequations}
\label{eqn_stateart}
\begin{align}
\min_x ~&\|x\|_1 \\
\label{eqn_stateart.L2}
\mbox{s.t.} \quad \|\tp x-\tilde{y}\|^2& \leq \epsilon^2\Delta^2  \;&&\text{($\ell_2$)}\\
\bar{\Phi}_+x &\geq  (G-\Delta)\bold{1}\; &&\text{($+$ saturation)}\\
\bar{\Phi}_-x &\leq  (\Delta-G)\bold{1},\; &&\text{($-$ saturation)}
\end{align}
\end{subequations}
where $\bold{1}$ is a column vector with all entries equal to $1$ and
$\tilde{y}$ is the quantized subvector of the observation vector $y$
that corresponds to the unsaturated measurements.  We refer to this
model as ``\ltwo'' in later discussions. It has been shown that the
estimation error arising from the formulation \eqref{eqn_stateart} is
bounded by $O(\epsilon\Delta)$ in the $\ell_2$ norm
sense~\citep[see][]{Laska11, Candes08, JacquesHF11}.

The paper proposes a robust model that replaces
\eqref{eqn_stateart.L2} with a least-square loss term in the objective
and adds an $\ell_\infty$ constraint:
\begin{subequations}
\label{eqn_ourformulation}
\begin{align}
  \min_{x}~{1\over 2}\|\tp x-\tilde{y}\|^2 &+
  \lambda\Delta\|x\|_1\label{eq_obj}\\
  \mbox{s.t.} \quad \|\tp x-\tilde{y}\|_\infty &\leq \Delta/2 \;&&\text{($\ell_\infty$)} \label{eq_Linfty_cst}\\
\label{eqn_ourformulation.sat+}
 \bar{\Phi}_+x &\geq (G-\Delta)\bold{1} \; &&\text{($+$ saturation)} \\
\label{eqn_ourformulation.sat-} \bar{\Phi}_-x &\leq
(\Delta-G)\bold{1}. \; &&\text{($-$ saturation)}
\end{align}\end{subequations}
We refer to this model as \lassoinf in later discussions.  The
$\ell_\infty$ constraint \eqref{eq_Linfty_cst} arises from the
fact that (unsaturated) quantization errors are bounded by
$\Delta/2$. This constraint may reduce the feasible region for the
recovery problem while retaining feasibility of the true solution
$x^*$, thus promoting more robust signal recovery. From the viewpoint
of optimization, the constraint~\eqref{eqn_stateart.L2} plays the same
role as the least-square loss term in the objective~\eqref{eq_obj},
when the values of $\epsilon$ and $\lambda$ are related
appropriately. However, it will become clear from our analysis that
inclusion of this term in the objective rather than applying the
constraint \eqref{eqn_stateart.L2} can lead a tighter bound on the
reconstruction error.


The analysis in this paper shows that when $\Phi$ is a Gaussian
ensemble, and provided that $S\log N=o(M)$ and several mild
conditions hold, the estimation error of for the solution of
\eqref{eqn_ourformulation} is bounded by
\begin{align*}
 \min \left\{ O \left(\sqrt{S(\log N)/M} \right), O(1) \right\} \Delta,
\end{align*}
with high probability, where $S$ is the sparsity (the number of
nonzero components in $x^*$). This estimate implies that solutions of
\eqref{eqn_ourformulation} are, in the worst case, better than the
state-of-the-art model~\eqref{eqn_stateart}, and also better than the
model in which only the $\ell_\infty$ constraint~\eqref{eq_Linfty_cst}
are applied (in place of the $\ell_2$
constraint~\eqref{eqn_stateart.L2}), as mentioned by
\cite{JacquesHF11}. More importantly, when the number $\tilde{M}$ of
unsaturated measurements goes to infinity faster than $S\log(N)$, the
estimation error for the solution of \eqref{eqn_ourformulation}
vanishes with high probability. (The model \eqref{eqn_stateart} does
not indicate such an improvement when more measurements are
available.) Although
\citet{JacquesHF11} show that the estimation error can be eliminated
only using an $\ell_p$ constraint (in place of the $\ell_2$
constraint~\eqref{eqn_stateart.L2}) when $p\rightarrow \infty$, the
oversampling condition (that is, the number of observations required)
is more demanding than for our formulation \eqref{eqn_ourformulation}.

We use the alternating direction method of multipliers (ADMM)
\citep[see][]{Eckstein92,Boyd11} to solve \eqref{eqn_ourformulation}.
 The computational results reported in Section~\ref{simulation}
 compare the solution properties for \eqref{eqn_ourformulation} to
 those for \eqref{eqn_stateart} and other formulations. In some of our
 examples, we consider choices for the parameter $\lambda$ and
 $\epsilon$ that admit the true solution $x^*$ as a feasible point
 with a specified level of confidence. We find that for these choices
 of $\lambda$ and $\epsilon$, the model~\eqref{eqn_ourformulation}
 yields more accurate solutions than the alternatives, where the
 signal is sparse and high confidence is desired.



\subsection{Related Work}

There have been several recent works on CS with quantization and
saturation.  \citet{Laska11} propose the formulation
\eqref{eqn_stateart}. \citet{JacquesHF11} replace the $\ell_2$
constraint \eqref{eqn_stateart.L2} by an $\ell_p$ constraint ($2\leq
p< \infty$) to handle the oversampling case, and show that values $p$
greater than $2$ lead to an improvement of factor $1/\sqrt{p+1}$ on
the bound of error in the recovered signal. The model of
\citet{Zymnis10} allows Gaussian noise in the measurements before
quantization, and solves the resulting formulation with an
$\ell_1$-regularized maximum likelihood formulation.\sjwresolved{This
  sentence is not quite clear to me. Their model is
  $y=Quantize[X\beta+Gaussian Noise]$} The average distortion
introduced by scalar, vector, and entropy coded quantization of CS is
studied by \citet{Dai11}.

The extreme case of 1-bit CS (in which only the sign of the
observation is recorded) has been studied by \citet{Gupta10} and
\citet{Boufounos08}.  In the latter paper, the $\ell_1$ norm objective
is minimized on the unit ball, with a sign consistency constraint. The
former paper proposes two algorithms that require at most $O(S\log N)$
measurements to recover the unknown support of the true signal (though
they cannot recover the magnitudes of the nonzeros reliably).

\subsection{Notation}

We use $\| \cdot \|_p$ to denote the $\ell_p$ norm, where $1\leq p\leq
\infty$, with $\| \cdot \|$ denoting the $\ell_2$ norm. We use $x^*$
for the true signal, $\hat{x}$ as the estimated signal (the solution
of \eqref{eqn_ourformulation}), and $h=\hat{x}-x^*$ as the
difference. As mentioned above, $S$ denotes the number of nonzero
elements of $x^*$.

For any $z \in \mathbb{R}^N$, we use $z_i$ to denote the $i$th
component and $z_T$ to denote the subvector corresponding to index set
$T\subset \{1,2,...,N\}$. Similarly, we use $\tp_{T}$ to denote the
column submatrix of $\tp$ consisting of the columns indexed by $T$.
The cardinality of $T$ is denoted by $|T|$. We use $\tp_{i}$ to denote
the $i$th column of $\tp$.

In discussing the dimensions of the problem and how they are related
to each other in the limit (as $N$ and $\tilde{M}$ both approach
$\infty$), we make use of order notation. If $\alpha$ and $\beta$
are both positive quantities that depend on the dimensions, we write
$\alpha = O(\beta)$ if $\alpha$ can be bounded by a fixed multiple
of $\beta$ for all sufficiently large dimensions. We write $\alpha =
o(\beta)$ if for {\em any} positive constant $\phi>0$, we have
$\alpha \le \phi \beta$ for all sufficiently large dimensions. We
write $\alpha = \Omega(\beta)$ if both $\alpha=O(\beta)$ and
$\beta=O(\alpha)$.

The projection onto the $\ell_\infty$ norm ball with the radius
$\lambda$ is
\[
\mathcal{P}_\infty(x,\lambda) := \mbox{sign}(x)\odot\min(|x|,
\lambda)
\]
where $\odot$ denotes componentwise multiplication and
$\mbox{sign}(x)$ is the sign vector of $x$. (The $i$th entry of
$\mbox{sign}(x)$ is $1$, $-1$, or $0$ depending on whether $x_i$ is
positive, negative, or zero, respectively.)

The indicator function $\mathbb{I}_{\Pi}(\cdot)$ for a set $\Pi$ is
defined to be $0$ on $\Pi$ and $\infty$ otherwise.

We partition the sensing matrix $\Phi$ according to saturated and
unsaturated measurements as follows:
\begin{equation} \label{eq:def.phis}
\bar{\Phi}=\left[\begin{matrix} - \bar{\Phi}_- \\
\bar{\Phi}_+
\end{matrix}\right]~\text{and}~\Phi=\left[\begin{matrix} \tilde{\Phi} \\
\bar{\Phi}
\end{matrix}\right].
\end{equation}
The maximum column norm in $\tp$ is denoted by $\fmax$, that is,
\begin{equation} \label{eq:fmax}
\fmax:=\max_{i\in
  \{1,2,\dotsc,N\}}\|\tp_{i}\|.
\end{equation}
We define the following quantities associated with a matrix $\Psi$
with $N$ columns:
\begin{subequations}
\begin{align} \label{eq:defrho-}
\rho^-(k, \Psi) & := \min_{|T|\leq k, h\in
  \mathbb{R}^{N}}{\|\Psi_T h_T\|^2 \over \|h_T\|^2} \\
\label{eq:defrho+}
\rho^+(k, \Psi) &:= \max_{|T|\leq k, h\in \mathbb{R}^{N}}{\|\Psi_{T}
h_T\|^2 \over \|h_T\|^2}.
\end{align}
\end{subequations}
We use the following abbrevations in some places:
\begin{align*}
\rho^-(k):=\rho^-(k, \Phi), \quad &
{\rho}^+(k):={\rho}^+(k, \Phi), \\
\tilde{\rho}^-(k):={\rho}^-(k, \tilde{\Phi}), \quad &
\tilde{\rho}^+(k):={\rho}^+(k, \tilde{\Phi}), \\
\bar{\rho}^-(k):={\rho}^-(k, \bar{\Phi}), \quad &
\bar{\rho}^+(k):={\rho}^+(k, \bar{\Phi}).
\end{align*}
%


Finally, we denote $(z)+ := \max\{z, 0\}$.

\subsection{Organization}

The ADMM optimization framework for solving \eqref{eqn_ourformulation}
is discussed in Section~\ref{algorithm}.  Section~\ref{theory}
analyzes the properties of the solution of \eqref{eqn_ourformulation}
in the worst case and compares with existing results. Numerical
simulations and comparisons of various formulations are reported in
Section~\ref{simulation} and some conclusions are offered in
Section~\ref{conclusion}. Proofs of the claims in Section~\ref{theory}
appear in the appendix.

\section{Algorithm}
\label{algorithm}

This section describes the ADMM algorithm for solving
\eqref{eqn_ourformulation}. For simpler notation, we combine the
saturation constraints as follows:
\[
\left[ \begin{matrix} - \bar{\Phi}_- \\ \bar{\Phi}_+ \end{matrix}
\right]  x \ge \left[ \begin{matrix} (G-\Delta) \bold{1} \\
(G-\Delta) \bold{1} \end{matrix} \right] \;\; \Leftrightarrow \;\;
\bar{\Phi} x \ge \bar{y},
\]
where $\bar{\Phi}$ is defined in \eqref{eq:def.phis} and $\bar{y}$ is
defined in an obvious way. To specify ADMM, we introduce auxiliary
variables $u$ and $v$, and write \eqref{eqn_ourformulation} as
follows.
\begin{equation}
\begin{aligned}
\min_x~&{1\over 2}\|\tp x-\tilde{y}\|^2+\lambda\|x\|_1\\
\mbox{s.t.} \quad u &=\tilde{\Phi}x-\tilde{y}\\
v &= \bar{\Phi}x-\bar{y}\\
\|u\|_\infty &\leq \Delta/2\\
 v &\geq \bold{0}. \label{eqn_L2inftyADMM}
\end{aligned}
\end{equation}
Introducing Lagrange multipliers $\alpha$ and $\beta$ for the two
equality constraints in \eqref{eqn_L2inftyADMM}, we write the
augmented Lagrangian for this formulation, with prox parameter
$\theta>0$ as follows:
\begin{align*}
L_{A}(x, u, v, \alpha, \beta)
& = {1\over 2}\|\tp x-\tilde{y}\|^2+\lambda\|x\|_1+ \langle \alpha, u-\tilde{\Phi}x+\tilde{y} \rangle + \langle \beta, v-\bar{\Phi}x+\bar{y} \rangle \\
& \quad + {\theta\over 2}\|u-\tilde{\Phi}x+\tilde{y}\|^2 + {\theta\over
2}\|v-\bar{\Phi}x +\bar{y} \|^2 + \mathbb{I}_{\|u\|_\infty\leq
\Delta/2}(u) + \mathbb{I}_{v\geq 0}(v)
\end{align*}
At each iteration of ADMM, we optimize this function with respect to
the primal variables $u$ and $v$ in turn, then update the dual
variables $\alpha$ and $\beta$ in a manner similar to gradient
descent.  The penalty parameter $\theta$ may be increased before
proceeding to the next iteration.

We summarize the ADMM algorithm in {\bf Algorithm~\ref{alg:ADMM}}.
\begin{algorithm}[h]
\caption{ADMM for~\eqref{eqn_L2inftyADMM}} \label{alg:ADMM}
\begin{algorithmic}[1]
\REQUIRE $\tilde{\Phi}$, $\tilde{y}$, $\bar{\Phi}$, $\bar{y}$, $\Delta$, $K$, and $x$; \\
\STATE Initialize $\theta > 0$, $\alpha=0$, $\beta=0$,
$u=\tilde{\Phi}x-\tilde{y}$, and
$v=\bar{\Phi}x-\bar{y}$; \\
\FOR{$k=0:K$} \STATE $u \leftarrow \arg\min_{u} \, L_{A}(x, u, v,
\alpha, \beta)$, that is,  $u \leftarrow \mathcal{P}_{\infty}(\tilde{\Phi}x-\tilde{y}-\alpha/\theta,
\Delta/2)$; \\
\STATE $v \leftarrow \arg\min_{v} \, L_{A}(x, u, v, \alpha, \beta)$, that is,  $v \leftarrow \max(\bar{\Phi}x-\bar{y}-\beta/\theta, 0)$; \\
\STATE\label{eqn_alg_lasso} $x \leftarrow \arg\min_{x} \, L_{A}(x, u, v, \alpha, \beta)$; \\
\STATE $\alpha \leftarrow \alpha + \theta(u-\tilde{\Phi}x+\tilde{y})$; \\
\STATE $\beta \leftarrow \beta+\theta(v-\bar{\Phi}x+\bar{y})$; \\
\STATE Possibly increase $\theta$; 
\\
\IF{stopping criteria is satisfied} \STATE break; \ENDIF \ENDFOR
\end{algorithmic}
\end{algorithm}

The updates in Steps~3 and 4 have closed-form solutions, as shown.
The function to be minimized in Step~5
consists of an $\|x\|_1$ term in conjunction with a quadratic term in
$x$. Many algorithms can be applied to solve this problem, e.g., the
SpaRSA algorithm \citep{Wright09}, the accelerated first order method
\citep{Nesterov07}, and the FISTA algorithm \citep{Beck09}. The update
strategy for $\theta$ in Step~7 is flexible. We use the following
simple and useful scheme from \citet{He00} and \citet{Boyd11}:
\begin{equation}
\theta:=  \left\{
   \begin{aligned}
   &\theta\tau&&\text{if $\|r\|> \mu\|d\|$}  \\
   &\theta/\tau&&\text{if $\|r\|< \mu\|d\|$} \\
   &\theta&&\text{otherwise},
   \end{aligned}
   \right.
\end{equation}
where $r$ and $d$ denote the primal and dual residual errors
respectively, specifically,
\begin{equation*}
r= \left[\begin{matrix}
         u-\tp x+\tilde{y}\\
        v-\bar{\Phi}v +\bar{y}
        \end{matrix}\right]~\text{and}~d=\theta \left[\begin{matrix}
         \tp (x-\xlast)\\
        \bar{\Phi}(x-\xlast)
        \end{matrix}\right],
\end{equation*}
where $\xlast$ denotes the previous value of $x$. The parameters $\mu$
and $\tau$ should be greater than $1$; we used $\mu=10$ and
$\tau=2$. Convergence results for ADMM can be found in \citep{Boyd11},
for example.


\section{Analysis}
\label{theory}

The section analyzes the properties of the solution obtained from
our formulation \eqref{eqn_ourformulation}. In
Subsection~\ref{sec:theory.boundh}, we obtain bounds on the norm of
the difference $h$ between the estimator $\hat{x}$ given by
\eqref{eqn_ourformulation} and the true signal $x^*$. Our bounds
require the true solution $x^*$ to be feasible for the formulation
\eqref{eqn_ourformulation}; we derive conditions that guarantee that
this condition holds, with a specified probability. In
Subsection~\ref{sec:theory.boundm}, we estimate the constants that
appear in our bounds under certain assumptions, including an
assumption that the full sensing matrix $\Phi$ is Gaussian.

We formalize our assumption about quantization errors as follows.
\begin{assumption} \label{ass:1}
The quantization errors $(\tp x^* - \tilde{y})_i$,
$i=1,2,\dotsc,\tilde{M}$ are independently distributed with expectation
$0$.
\end{assumption}

\noindent 
(Note that since $\tilde{\Phi}$ and $\tilde{y}$ refer to the
unsaturated data, the quantization error are bounded uniformly by
$\Delta/2$.)

\subsection{Estimation Error Bounds} \label{sec:theory.boundh}

The following error estimate is our main theorem, proved in the
appendix.  \sjwresolved{I think we have to put the statement of
  Lemma~\ref{lem_left.LASSO} here, to define the notation $A_0$ and
  $A_1$. I think this might be preferable to a forward
  reference. {\rca I move the definitions of $A_0(\Psi)$ and
    $A_1(\Psi)$ into Theorem 1. What do you think?}}

\begin{theorem} \label{thm_main}
Assume that the true signal $x^*$ satisfies
\begin{equation}\label{eqn_feasible}
\|\tp^T(\tp x^* - \tilde{y})\|_\infty \leq \lambda\Delta /2,
\end{equation}
for some value of $\lambda$. Let $s$ be a positive integer in
  the range $1,2,\dotsc,N$, and define
\begin{subequations}
\begin{align}
\label{eq:def.A0}
\bar{A}_0(\Psi):=&{\rho}^-(2s, \Psi)-3[{\rho}^+(3s, \Psi)-{\rho}^-(3s, \Psi)]\\
\label{eq:def.A1}
\bar{A}_1(\Psi):=&4[{\rho}^+(3s, \Psi)-{\rho}^-(3s,\Psi)],\\
\bar{C}_1(\Psi):=&4+{\sqrt{10}A_1(\Psi)}/{A_0(\Psi)},\label{eq:def.C1}\\
\bar{C}_2(\Psi):=&\sqrt{10 /A_0(\Psi)} \label{eq:def.C2}.
\end{align}
\end{subequations}
\sjwresolved{I don't think $B_S$ appears in the analysis any more.
Can
  we delete it? {\rca Yes, you are right. It should not appear in the
    theorem. Actually, I found a mistake about the bound caused by the
    saturation constraint in this morning. I deleted the inequality
    about it, but have no time to check if there is something
    inconsistent.} Have you checked and is it OK now? {\rca Yes, I have deleted all $B_S$ in this draft.}}
We have that for any $T_0\subset \{1,2,...,N\}$ with $s=|T_0|$, if
$A_0(\tp)>0$, then
\begin{subequations}
\begin{align}
\|h\| \leq& 2\bar{C}_2(\tp)^2\sqrt{s}\lambda\Delta + \left[\bar{C}_1(\tp)/\sqrt{s}\right] \|x^*_{T_0^c}\|_1 +
2.5\bar{C}_2(\tp)\sqrt{\lambda\Delta \|x^*_{T_0^c}\|_1},
\label{eqn_thm:LASSO.1} \\
\|h\|\leq & \bar{C}_2(\tp)\sqrt{\tilde{M}}\Delta + \left[\bar{C}_1(\tp)/\sqrt{s}\right] \|x^*_{T_0^c}\|_1.
\label{eqn_thm:LASSO.2}
\end{align}
\end{subequations}

Suppose that Assumption~\ref{ass:1} holds, and let $\pi \in (0,1)$
be given. If we define $\lambda = \sqrt{2\log {2N/\pi}}\fmax$
in~\eqref{eqn_ourformulation}, then with probability at least
$P=1-\pi$, the inequalities \eqref{eqn_thm:LASSO.1} and
\eqref{eqn_thm:LASSO.2} hold.
\end{theorem}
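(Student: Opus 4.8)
The plan is to prove the two error bounds under the deterministic feasibility hypothesis \eqref{eqn_feasible} first, and then show that the stated choice of $\lambda$ forces \eqref{eqn_feasible} to hold with probability at least $1-\pi$ under Assumption~\ref{ass:1}. Throughout, write $h = \hat{x} - x^*$ and let $r := \tp x^* - \tilde{y}$ denote the vector of unsaturated quantization errors, so that $\tp\hat{x} - \tilde{y} = r + \tp h$.

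For the deterministic bounds I would start from the optimality of $\hat{x}$ together with the feasibility of $x^*$ in \eqref{eqn_ourformulation}. Since $\hat x$ minimizes the objective over the feasible set and $x^*$ lies in it, comparing objective values and expanding $\|r+\tp h\|^2$ gives
\[
\tfrac12\|\tp h\|^2 \le -\langle \tp^T r,\, h\rangle + \lambda\Delta\bigl(\|x^*\|_1 - \|\hat x\|_1\bigr).
\]
The feasibility hypothesis \eqref{eqn_feasible} bounds $|\langle \tp^T r, h\rangle| \le \|\tp^T r\|_\infty\|h\|_1 \le (\lambda\Delta/2)\|h\|_1$, and splitting the $\ell_1$ terms across $T_0$ and $T_0^c$ with the triangle inequality produces both the cone inequality $\|h_{T_0^c}\|_1 \le 3\|h_{T_0}\|_1 + 4\|x^*_{T_0^c}\|_1$ and the sharpened quadratic bound $\tfrac12\|\tp h\|^2 \le \tfrac{3\lambda\Delta}{2}\|h_{T_0}\|_1 + 2\lambda\Delta\|x^*_{T_0^c}\|_1$.

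To convert control of $\|\tp h\|$ into control of $\|h\|$, I would run the standard shelling argument: partition $T_0^c$ into blocks $T_1, T_2, \dotsc$ of size $s$ in order of decreasing magnitude, so that $\sum_{j\ge2}\|h_{T_j}\| \le s^{-1/2}\|h_{T_0^c}\|_1$, and use the restricted-eigenvalue quantities $\rho^-(2s,\tp)$ and $\rho^\pm(3s,\tp)$ — encoded in $A_0(\tp)$ and $A_1(\tp)$ — to bound $\|h_{T_0\cup T_1}\|$ in terms of $\|\tp h\|$ and the cross terms. The hypothesis $A_0(\tp)>0$ is exactly what keeps this step nondegenerate. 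Combining with the cone inequality yields a bound on $\|h\|$ in terms of $\|\tp h\|$ and $\|x^*_{T_0^c}\|_1$, with coefficients $\bar C_1(\tp)$ and $\bar C_2(\tp)$. Feeding the quadratic bound on $\tfrac12\|\tp h\|^2$ into this inequality and solving the resulting quadratic in $\|h\|$ gives \eqref{eqn_thm:LASSO.1}, the square-root term being the signature of that quadratic. For \eqref{eqn_thm:LASSO.2} I would instead bound $\|\tp h\|$ directly: both $\tp\hat x - \tilde y$ (feasible for \eqref{eq_Linfty_cst}) and $r$ (a quantization error) have $\ell_\infty$ norm at most $\Delta/2$, so $\|\tp h\|_\infty \le \Delta$ and hence $\|\tp h\| \le \sqrt{\tilde M}\,\Delta$; substituting this dimension-free bound into the same restricted-eigenvalue inequality gives \eqref{eqn_thm:LASSO.2}.

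Finally, for the probabilistic claim I would verify \eqref{eqn_feasible} coordinatewise. Each $(\tp^T r)_j = \sum_{i=1}^{\tilde M}(\tp_j)_i r_i$ is a sum of independent, mean-zero terms with $|r_i|\le\Delta/2$, so Hoeffding's inequality gives $P\bigl(|(\tp^T r)_j|\ge \lambda\Delta/2\bigr) \le 2\exp\bigl(-\lambda^2/(2\|\tp_j\|^2)\bigr) \le 2\exp\bigl(-\lambda^2/(2\fmax^2)\bigr)$. The choice $\lambda = \sqrt{2\log(2N/\pi)}\,\fmax$ makes this at most $\pi/N$, and a union bound over the $N$ coordinates yields $P\bigl(\|\tp^T r\|_\infty \ge \lambda\Delta/2\bigr)\le\pi$; on the complementary event \eqref{eqn_feasible} holds, so the deterministic bounds apply. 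I expect the main obstacle to be the deterministic restricted-eigenvalue step — tracking the constants $A_0$, $A_1$, $\bar C_1$, $\bar C_2$ through the shelling argument and the cross-term estimates, and in particular assembling the quadratic that produces the square-root term in \eqref{eqn_thm:LASSO.1}. The probabilistic step is a routine Hoeffding-plus-union-bound argument.
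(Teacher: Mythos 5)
Your proposal is correct, and its overall architecture coincides with the paper's: cone condition from optimality of $\hat{x}$, shelling of $T_0^c$ into blocks of size $s$, a restricted-eigenvalue lower bound $\|\tp h\|^2 \geq A_0\|h_{T_{01}}\|^2 - A_1\|h_{T_{01}}\|\|x^*_{T_0^c}\|_1/\sqrt{s}$, two separate upper bounds on $\|\tp h\|^2$ (one driven by $\lambda$, one by the $\ell_\infty$ constraint via $\|\tp h\|_\infty \le \Delta$), a quadratic inequality solved for $\|h_{T_{01}}\|$, and Hoeffding plus a union bound for the choice of $\lambda$. The probabilistic step is literally the paper's Lemma~\ref{lem:feasible}. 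The one genuine difference is how you obtain the $\lambda$-driven upper bound on $\|\tp h\|^2$: the paper (Lemma~\ref{lem_right.LASSO}) invokes the first-order optimality conditions of \eqref{eqn_ourformulation}, writing $\tp^T(\tp\hat{x}-\tilde{y})+\lambda\Delta g + n = 0$ with $g\in\partial\|\hat x\|_1$ and $n$ in the normal cone $N_F(\hat x)$ of the feasible set, and uses $x^*\in F$ to discard the normal-cone term, arriving at $\|\tp h\|^2 \le \tfrac32\lambda\Delta\|h\|_1 \le 6\sqrt{s}\lambda\Delta\|h_{T_{01}}\|+6\lambda\Delta\|x^*_{T_0^c}\|_1$. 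You instead compare objective values at $\hat x$ and $x^*$ and keep the quadratic term $\tfrac12\|\tp h\|^2$ (which the paper's Lemma~\ref{lem_h} drops when deriving the cone condition), getting $\tfrac12\|\tp h\|^2 \le \tfrac32\lambda\Delta\|h_{T_0}\|_1 + 2\lambda\Delta\|x^*_{T_0^c}\|_1$ and the cone condition simultaneously from one inequality. Your route is more elementary --- no normal-cone/KKT machinery on the constrained problem is needed, only feasibility of $x^*$ and optimality of $\hat x$ --- and it yields constants ($3\sqrt{s}\lambda\Delta\|h_{T_{01}}\|+4\lambda\Delta\|x^*_{T_0^c}\|_1$ after doubling) that are strictly better than the paper's, so the stated bounds \eqref{eqn_thm:LASSO.1}--\eqref{eqn_thm:LASSO.2} follow a fortiori. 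One small imprecision: the quadratic you solve is in $\|h_{T_{01}}\|$, not in $\|h\|$ (the restricted-eigenvalue bound only controls $\|h_{T_{01}}\|$ from below); the passage to $\|h\|$ must go through the inequality $\|h\| \le \sqrt{10}\,\|h_{T_{01}}\| + 4\|x^*_{T_0^c}\|_1/\sqrt{s}$, exactly as in the paper's Lemma~\ref{lem_h}, but your sketch already contains that ingredient.
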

From the proof in the appendix, one can see that the estimation error
bound ~\eqref{eqn_thm:LASSO.1} is mainly determined by the
least-squares term in the objective~\eqref{eq_obj}, whereas the
estimation error bound~\eqref{eqn_thm:LASSO.2} arises from the
$L_\infty$ constraint~\eqref{eq_Linfty_cst}.
\sjwresolved{I could not make sense of this last phrase. {\rca
Sorry, I should remove this argument.} I commented out the last phrase -
  OK? {\rca I removed the last phrase in the early version. It is
    about $B_S$ which does not appear in our new version any more.}
  Ji, I am a bit confused. Could you take a look at the phrase I
  commented out here. It mentions the ``saturation constraints''
  appearing in the bound \eqref{eqn_thm:LASSO.2}, but in fact they do
  not appear there (they appeared in the old bound). So I think this
  phrase also related to material which has since been removed. Do you
  agree that the text is now OK as it stands? {\rca Yes, I think the
    current text is OK, since we do not discuss anything about the
    saturation constraints in the following text.}}

If we take $T_0$ as the support set of $x^*$, only the first terms
in~\eqref{eqn_thm:LASSO.1} and \eqref{eqn_thm:LASSO.2} remain. 

The condition $A_0(\tp)>0$ is a sort of restricted isometry (RIP)
condition required in~\citep{Laska11}--- it assumes reasonable conditioning of column submatrices
of $\tp$ with $O(S)$ columns. Specifically, the number of measurements
$\tilde{M}$ required to satisfy $\bar{A}_0(\tp)>0$ and RIP are of the
same order: $O(S\log (N))$.

\subsection{Estimating the Constants} \label{sec:theory.boundm}

Here we discuss the effect of the least-squares term and the
$\ell_\infty$ constraints by comparing the leading terms on the
right-hand sides of~\eqref{eqn_thm:LASSO.1} and
\eqref{eqn_thm:LASSO.2}.  To simplify the comparison, we make the
following assumptions.
\begin{itemize}
\item[(i)] $\Phi$ is a Gaussian random matrix, that is, each entry is
  i.i.d., drawn from a standard Gaussian distribution
  $\mathcal{N}(0,1)$.
\item[(ii)] the confidence level $P=1-\pi$ is fixed.
\item[(iii)] $s$ is equal to the sparsity number $S$.
\item[(iv)] $S\log N = o(M)$.
\item[(v)] the saturation ratio $\chisat := \bar{M}/M$ is smaller than
  a small positive threshold that is defined in
  Theorem~\ref{thm_rhobound}.
  \item[(vi)] $T_0$ is taken as the support set of $x^*$, so that
    $x^*_{T_0^c}=0$.
\end{itemize}
Note that (iii) and (iv) together imply that $s=S \ll M$, while
(v) implies that $\tilde{M} = \Omega(M)$.

The discussion following Theorem~\ref{thm_rhobound} in Appendix
indicates that under these assumptions, the quantities defined in \eqref{eq:def.C1}, \eqref{eq:def.C1}, and
\eqref{eq:fmax} satisfy the following estimates:
\[
\bar{C}_1(\tp)= \Omega(1), \quad \bar{C}_2(\tp) =
\Omega(1/\sqrt{M}),\quad \fmax = \Omega(\sqrt{M}),
\]
with high probability, for sufficiently high dimensions. Using the
estimates in Theorem~\ref{thm_rhobound}, with the setting of $\lambda$
from Theorem~\ref{thm_main}, we have
\begin{subequations}
\begin{align}
\bar{C}_2(\tp)^2 \sqrt{s}\lambda\Delta &= O \left( {\sqrt{S\log N}\fmax\Delta \over M} \right)
=
O \left(\sqrt{S\log N \over M}\Delta \right)\rightarrow 0, \label{eq:ls_bound_part}\\
\bar{C}_2(\tp)\sqrt{\tilde{M}}\Delta &= O \left( {\sqrt{\tilde{M}}\Delta\over \sqrt{M}}
\right) = O\left(\Delta\right).\label{eq:linfty_bound_part}
\end{align}
\end{subequations}
By combining the estimation error bounds \eqref{eqn_thm:LASSO.1} and
\eqref{eqn_thm:LASSO.2}, we have
\begin{align}
  \|h\| \leq \min \, \left\{O \left(\sqrt{S(\log N)/M} \right),
  O(1) \right\} \Delta.\label{eqn_boundsimple}
\end{align}
In the regime described by assumption (iv), \eqref{eq:ls_bound_part}
will be asymptotically smaller than \eqref{eq:linfty_bound_part}. The
bound in~\eqref{eqn_boundsimple} has size $O \left(\sqrt{S(\log N)/
  M}\Delta\right)$, consistent with the upper bound of the Dantzig
selector \citep{Candes07a} and LASSO \citep{Zhang09a}\footnote{Their
  bound is $O \left( \sqrt{S(\log N)/ M}\sigma \right)$ where
  $\sigma^2$ is the variance of the observation noise which, in the
  classical setting for the Dantzig selector and LASSO, is assumed to
  follow a Gaussian distribution.}.  Recall that the estimation error
of the formulation \eqref{eqn_stateart} is $O \left(\|\tp
x^*-\tilde{y}\|/\sqrt{\tilde{M}} \right)$ \citep{JacquesHF11, Laska11}
under the RIP condition, for the number of measurements defined in
(iv). Since $\|\tp x^*-\tilde{y}\|=O \left( \sqrt{\tilde{M}}\Delta
\right)$ \citep{JacquesHF11}, this estimate is consistent with the
error that would be obtained if we imposed only the $\ell_{\infty}$
constraint \eqref{eq_Linfty_cst} in our formulation.  Note that it
does not converge to zero even all assumptions (i)-(vi) hold.
Under the assumption (iv), the estimation error for
\eqref{eqn_ourformulation} will vanish as the dimensions grow, with
probability at least $1-\pi$.
By contrast, \citet{JacquesHF11} do not account for saturation in
their formulation and show that the estimation error converges to $0$
using an $\ell_p$ constraint in place of \eqref{eqn_stateart.L2} when
$p \to \infty$ and oversampling happens --- specifically, $M\geq
\Omega\left(\left(S\log (N/S)\right)^{p/2}\right)$. Weaker
oversampling conditions are available using our formulation
\eqref{eqn_ourformulation}. For example, ${M} = S (\log N)^2$ would
produce consistency in our formulation, but not in
\eqref{eqn_stateart}.



\section{Simulations} \label{simulation} 

This section compares results for five variant formulations. The first
one is our formulation \eqref{eqn_ourformulation}, which we refer to
as \lassoinf. We also tried a variant in which the $\ell_\infty$
constraint~\eqref{eq_Linfty_cst} was omitted from
\eqref{eqn_ourformulation}. The recovery performance for this variant
was uniformly worse than for \lassoinf, so we do not show it in our
figures. (It is, however, sometimes better than the formulations
described below, and uniformly better than \dantzig.)
The remaining four alternatives are based on the following model, in
which the $\ell_2$ norm of the residual appears in a constraint
(rather than in the objective) and a constraint of Dantzig type also
appears:
\begin{subequations}
\label{eqn_full}
\begin{alignat}{2}
\min_x~&\|x\|_1 && \\
\label{eqn_full.L2}
\mbox{s.t.} \quad \|\tp x-\tilde{y}\|^2 &\leq \epsilon^2\Delta^2 \; &&\text{($\ell_2$)}\\
\label{eqn_full.Linf}
\|\tp x-\tilde{y}\|_\infty &\leq \Delta/2 \; &&\text{($\ell_\infty$)}\\
\label{eqn_full.Dantzig}
\|\tp^T(\tp x-\tilde{y})\|_\infty &\leq \lambda\Delta/2 \; &&\text{(Dantzig)}\\
\label{eqn_full.sat+}
\bar{\Phi}_+x &\geq (G-\Delta)\bold{1} \; &&\text{($+$ saturation)} \\
\label{eqn_full.sat-} \bar{\Phi}_-x &\leq (\Delta-G)\bold{1}. \;
&&\text{($-$ saturation)}
\end{alignat}
\end{subequations}
\sjwresolved{There is a problem with $\lambda$, which is used in a
  different sense in \eqref{eqn_full} and
  \eqref{eqn_ourformulation}. Hence the comments in the result of this
  section, about choice of $\lambda$ are very confusing. Which version
  of $\lambda$ do you refer to in these discussions? Do you need a
  different item of notation? Do more discussions need to be added for
  the LASSO formulation? It seems to me that the discussion on pages
  7-8 will need to be modified , if we are going to keep reporting
  results from all five models. {\rca Actually, the $\lambda$ in our
    formulation and the Dantzig constraint are consistent, since both
    of them are used to bound $\|\tp^T(\tp x-\tilde{y})\|_\infty \leq
    \lambda\Delta/2$.} OK.}
The four formulations are obtained from this model as
follows.\sjwresolved{In these descriptions you don't say whether the
  saturation constraints are enforced. Please be explicit about it. In
  particular, in \linf, are the saturation constraints enforced?  We
  said earlier that \citet{JacquesHF11} does {\em not} enforce
  saturation - so which is correct? {\rca Actually, we enforce the
    saturation constraint for all algorithms, since the saturation
    constraint is always helpful for all
    algorithms. \citet{JacquesHF11} did not assume the saturation
    noise, but we still contain the saturation constraint for the
    $L_\infty$ model in our comparison (Otherwise it is unfair for
    comparison.)}}
\begin{itemize}
\item \linf: an $\ell_{\infty}$ constraint model that enforces
  \eqref{eqn_full.Linf}, \eqref{eqn_full.sat+}, and
  \eqref{eqn_full.sat-}, but not \eqref{eqn_full.L2} or
  \eqref{eqn_full.Dantzig}. This model is obtained by letting $p \to
  \infty$ in \citet{JacquesHF11} and adding saturation constraints.
\item \ltwo: an $\ell_2$ constraint model (that is, the
  state-of-the-art model \eqref{eqn_stateart} \citep{Laska11}) that
  enforces \eqref{eqn_full.L2}, \eqref{eqn_full.sat+}, and
  \eqref{eqn_full.sat-}, but not \eqref{eqn_full.Linf} or
  \eqref{eqn_full.Dantzig};
\item \dantzig: the Dantzig constraint algorithm with saturation
  constraints, which enforces \eqref{eqn_full.Dantzig},
  \eqref{eqn_full.sat+}, and \eqref{eqn_full.sat-} but not
  \eqref{eqn_full.L2} or \eqref{eqn_full.Linf};
\item \ltwodantziginf: the full model defined by \eqref{eqn_full}.
\end{itemize}
Note that we use the same value of $\lambda$
in~\eqref{eqn_full.Dantzig} as in \eqref{eqn_ourformulation}, since in
both cases they lead to a constraint that the true signal $x^*$
satisfies $\|\tp^T(\tp x^* - \tilde{y})\|_\infty \leq \lambda\Delta/2$
with a certain probability; see~\eqref{eqn_full.Dantzig}
and~\eqref{eqn_feasible}. Readers familiar with the equivalence
between LASSO and Dantzig selector \citep{Bickel09} may notice that
\ltwodantziginf has similar theoretical error bounds to \lassoinf. Our
computational results show that the practical performance of these two
approaches is also similar.

The synthetic data is generated as follows. The measurement matrix
$\tp\in \mathbb{R}^{M\times N}$ is a Gaussian matrix, each entry being
independently generated from $\mathcal{N}(0, 1/R^2)$, for a given
parameter $R$. The $S$ nonzero elements of $x^*$ are in random
locations and their values are drawn from independently from
$\mathcal{N}(0,1)$.  We use
$\mbox{SNR}=-20\log_{10}(\|\hat{x}-x^*\|/\|x^*\|)$ as the error
metric, where $\hat{x}$ is the signal recovered from each of the
formulations under consideration. Given values of saturation parameter
$G$ and number of bits $B$, the interval $\Delta$ is defined
accordingly as $\Delta=2^{B-1}G$. All experiments are repeated $30$
times; we report the average performance.

We now describe how the bounds $\lambda$ for \eqref{eq_obj} and
\eqref{eqn_full.Dantzig} and $\epsilon$ for \eqref{eqn_full.L2}
\jlresolved{{\rca I saw that you changed the labels of Eq(16). Do
you mean (16b) and (16d)?} SJW: I guess, but since $\lambda$ also
  appears in \eqref{eq_obj}, don't we need to mention it here also? Do
  we use the same technique to choose this earlier $\lambda$?  {\rca
    Yes, we use the same way to choose both of them.}} were chosen for
these experiments. Essentially, $\epsilon$ and $\lambda$ should be
chosen so that the constraints~\eqref{eqn_full.L2}
and~\eqref{eqn_full.Dantzig} admit the true signal $x^*$ with a a
high (specified) probability. There is a tradeoff between tightness
of the error estimate and confidence.  Larger values of $\epsilon$
and $\lambda$ can give a more confident estimate, since the defined
feasible region includes $x^*$ with a higher probability, while
smaller values provide a tighter estimate. Although
Lemma~\ref{lem:feasible} suggests how to choose $\lambda$ and
\cite{JacquesHF11} show how to determine $\epsilon$, the analysis it
not tight, especially when $M$ and $N$ are not particularly large.
We use instead an approach based on simulation and on making the
assumption (not used elsewhere in the analysis) that the
non-saturated quantization errors $\xi_i=(\tp x^*-\tilde{y})_i$ are
i.i.d. uniform in $U_{[-\Delta/2,\Delta/2]}$. (As noted earlier,
this stronger assumption makes sense in some settings, and has been
used in previous analyses.)
We proceed by generating numerous independent samples of $Z\sim
U_{[-\Delta/2,\Delta/2]} $. Given a confidence level $1-\pi$ (for
$\pi>0$), we set $\epsilon$ to the value for which $\mathbb{P}(Z\geq
\epsilon\Delta) = \pi$ is satisfied empirically. A similar technique
is used to determine $\lambda$. When we seek certainty ($\pi=0$, or
confidence $P=100\%$), we set $\epsilon$ and $\lambda$ according to
the true solution $x^*$, that is, $\epsilon=\|\tp
x^*-\tilde{y}\|/\Delta$ and $\lambda=2\|\tp^T(\tp
x^*-\tilde{y})\|_\infty/\Delta$.

To summarize the parameters that are varied in our experiments: 
\begin{itemize}
\item $M$ and $N$ are dimensions of $\Phi$,
\item $S$ is sparsity of solution $x^*$,
\item $G$ is saturation level, 
\item $B$ is number of bits, 
\item $R$ is the inverse standard deviation of the elements of $\Phi$,
  and
\item $P=1-\pi$ denotes the confidence levels, expressed as a
  percentage.
\end{itemize}

%

In Figure~\ref{fig_N}, we fix the values of $M$, $S$, $G$, $R$, and
$P$, choose two values of $B$: 3 and 5.  Plots show the average SNRs
(over $30$ trials) of the solutions $\hat{x}$ recovered from the five
models against the dimension $N$.  In this and all subsequent figures,
the saturation ratio is defined to be $\bar{M}/M = (M-\tilde{M}) / M$,
the fraction of extreme measurements. Our \lassoinf formulation and
the full model \ltwodantziginf give the best recovery performance for
small $N$, while for larger $N$, \lassoinf is roughly tied with the
the \ltwo model. The \linf and \dantzig models have poorer
performance, a pattern that we continue to observe in subsequent
tests.

\begin{figure}[h]
  \centering
    \subfigure{\includegraphics[scale=0.31]{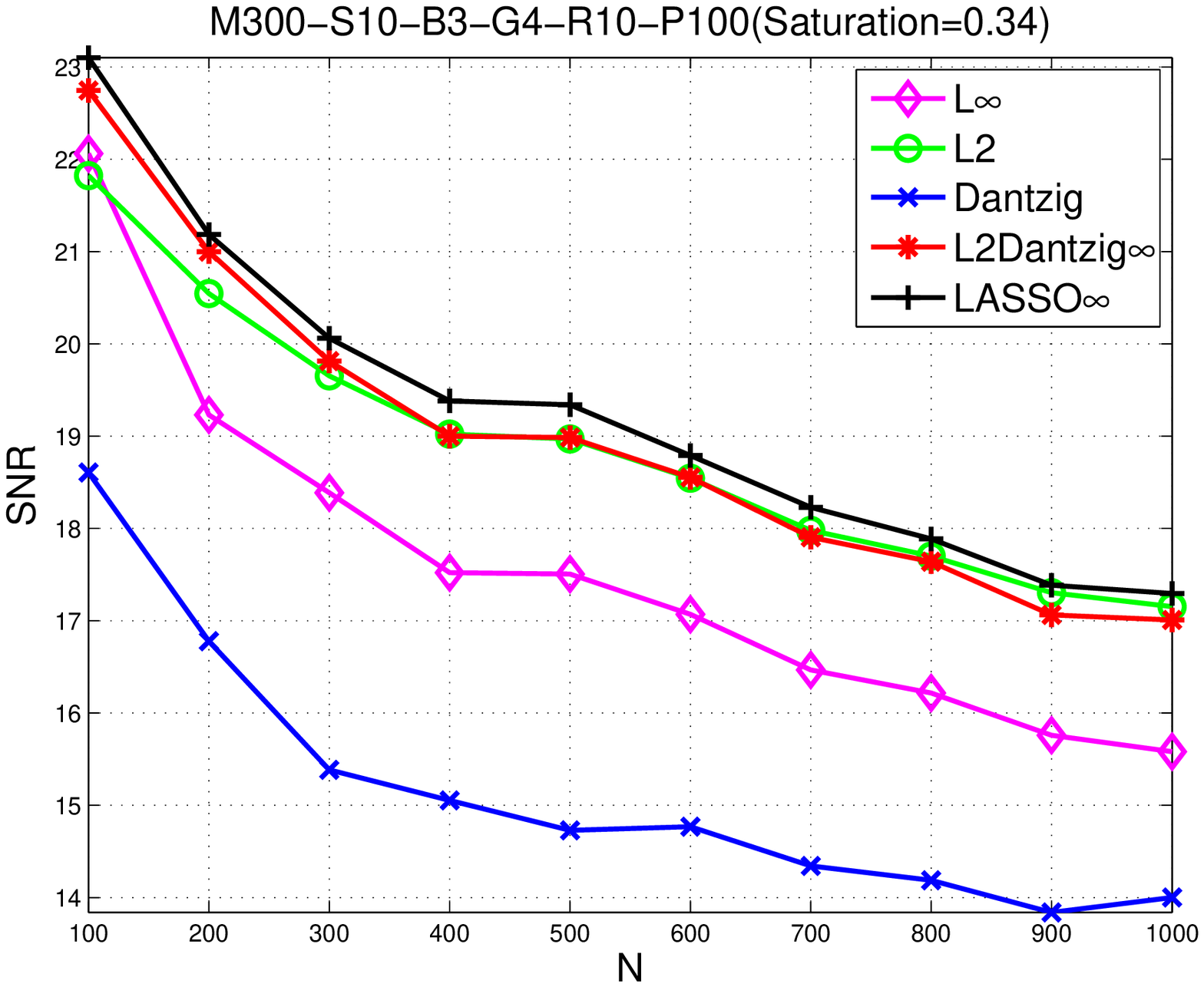}}
    \subfigure{\includegraphics[scale=0.31]{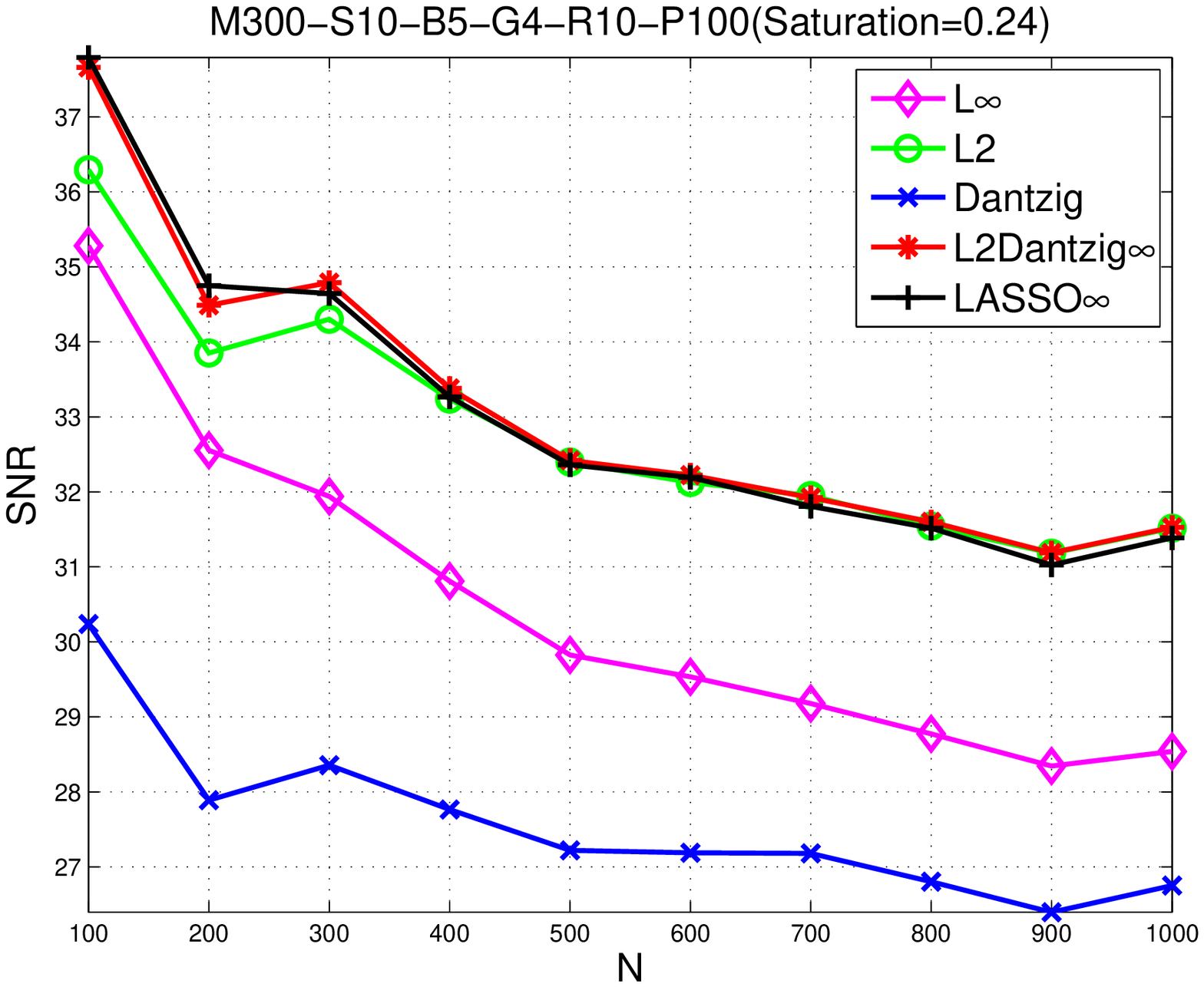}}
    \caption{Comparison among various models for fixed values
      $M=300$, $S=10$, $G=4$, $R=10$, and $P=100\%$, and two values
      of $B$ (3 and 5, respectively). The graphs show dimension
      $N$ (horizontal axis) against SNR (vertical axis) for values of
      $N$ between $100$ and $1000$, averaged over $30$ trials for each
      combination of parameters.}
    \label{fig_N}
\end{figure}

Figure~\ref{fig_S} fixes $N$, $M$, $B$, $G$, $R$, and $P$, and plots
SNR as a function of sparsity level $S$.  For all models, the
quality of reconstruction decreases rapidly with $S$. \lassoinf and
\ltwodantziginf achieve the best results overall, but are roughly
tied with the \ltwo model for all but the sparsest signals. The
\linf model is competitive for very sparse signals, while the
\dantzig model lags in performance.

\begin{figure}
  \centering
    \subfigure{\includegraphics[scale=0.31]{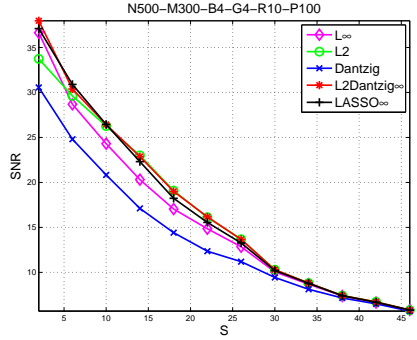}}
    \caption{Comparison among various models for $N=500$, $M=300$,
      $B=4$, $G=0.4$, $R=10$, and $P=100\%$. The graph shows sparsity
      level $S$ (horizontal axis) plotted against SNR (vertical axis),
      averaged over $30$ trials.}
    \label{fig_S}
\end{figure}

We now examine the effect of number of measurements $M$ on SNR.
Figure~\ref{fig_M1} fixes $N$, $S$, $G$, $R$, and $P$, and tries two
values of $B$: $3$ and $5$, respectively. Figure~\ref{fig_M2} fixes
$B=4$, and allows $N$ to increase with $M$ in the fixed ratio $5/4$.
These figures indicate that the \lassoinf and \ltwodantziginf models
are again roughly tied with the \ltwo model when the number of
measurements is limited. For larger $M$, our models have a slight
advantage over the \ltwo and \linf models, which is more evident
when the quantization intervals are smaller (that is, $B=4$).
Another point to note from Figure~\ref{fig_M2} is that \linf
outperforms \ltwo when both $M$ and $N$ are much larger than the
sparsity $S$.


\begin{figure}[h]
  \centering
    \subfigure{\includegraphics[scale=0.31]{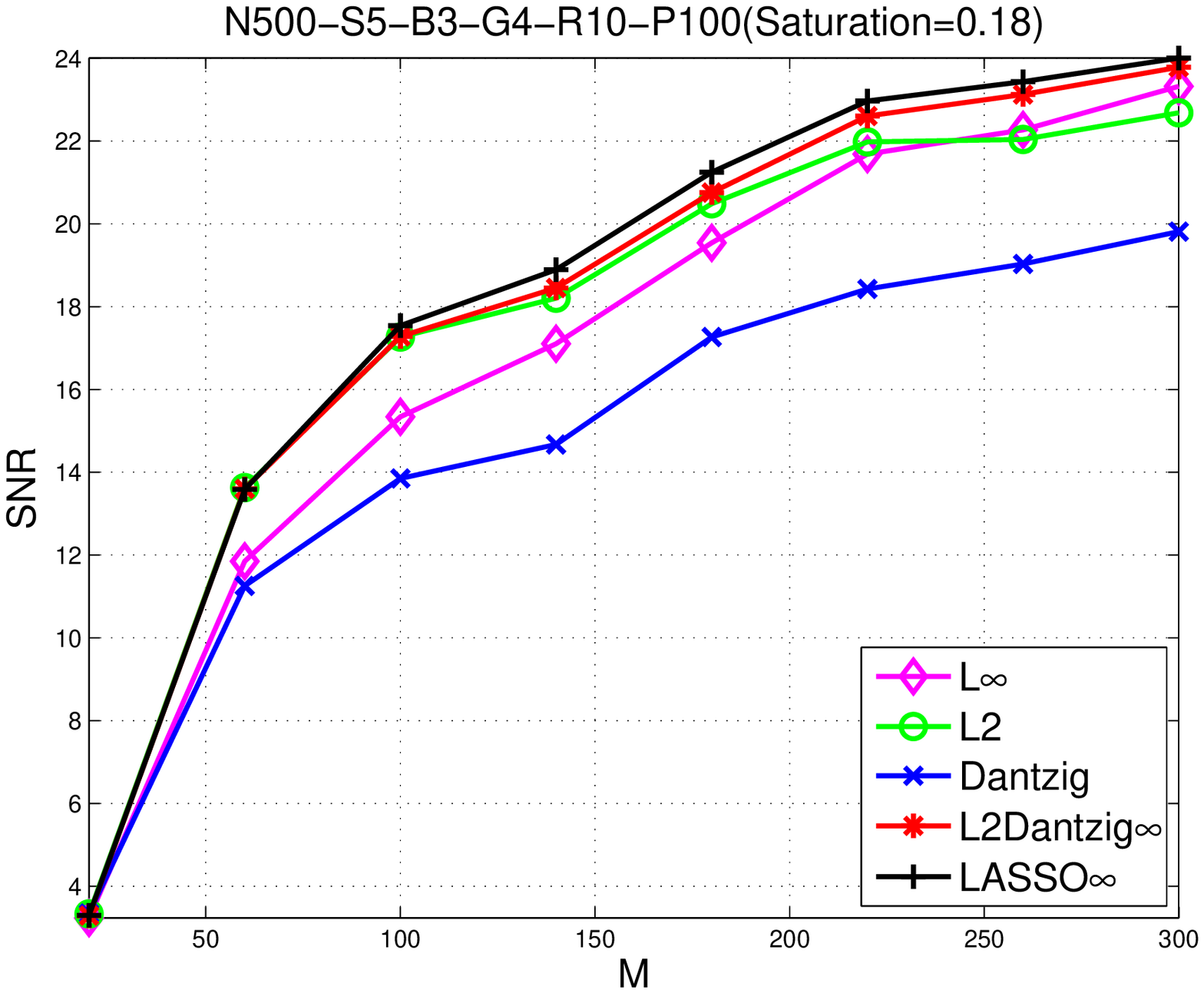}}
    \subfigure{\includegraphics[scale=0.31]{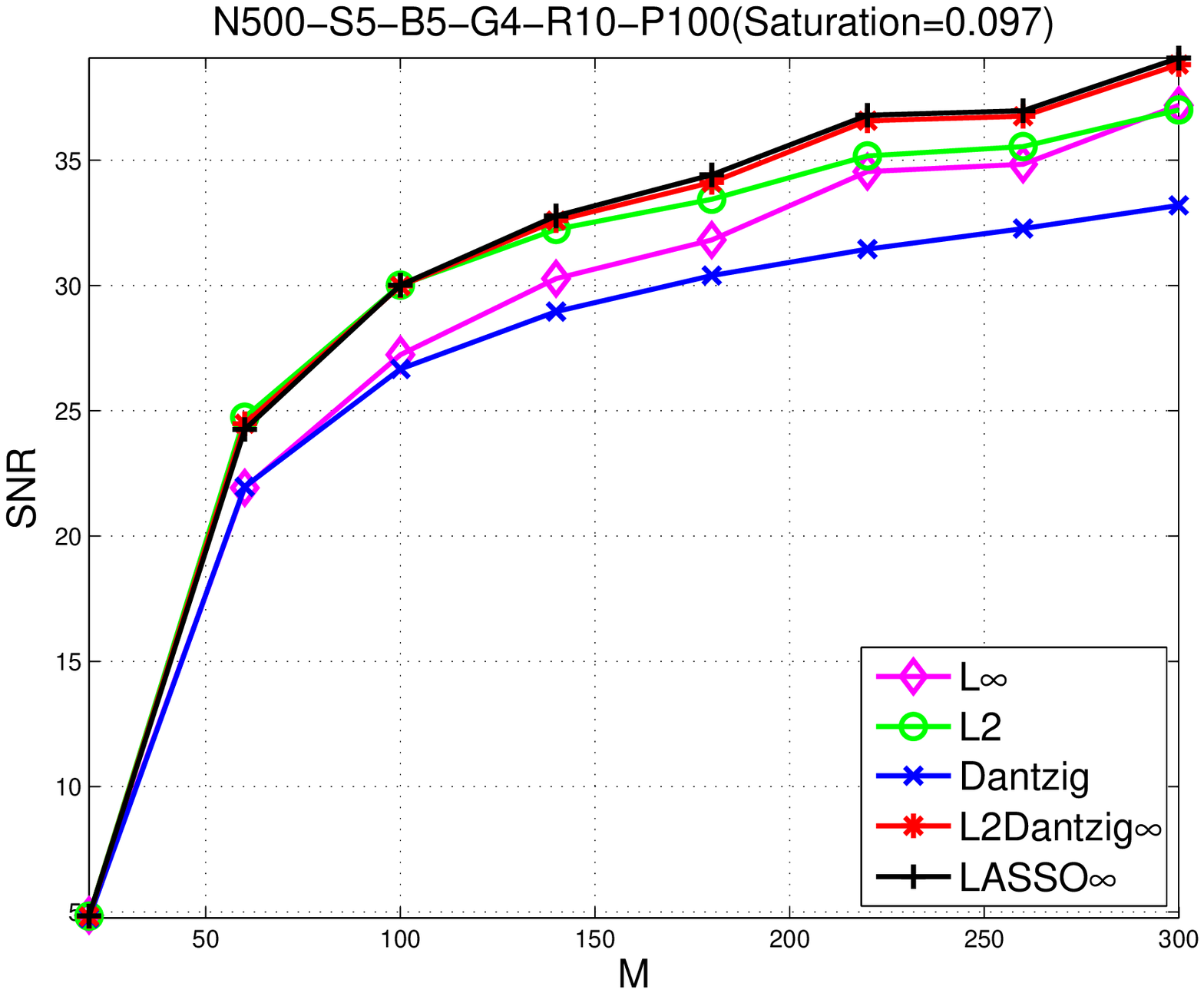}}
\caption{Comparison among various models for fixed values $N=500$,
$S=5$, $G=0.4$, $R=15$, and $P=100\%$, and two values of $B$ ($3$
and $5$). The graphs show the number of measurements $M$ (horizontal
axis) against SNR (vertical axis) for values of $M$ between $20$ and
$300$, averaged over $30$ trials for each combination of
parameters.}
    \label{fig_M1}
\end{figure}
\begin{figure}[h]
  \centering
     \subfigure{\includegraphics[scale=0.31]{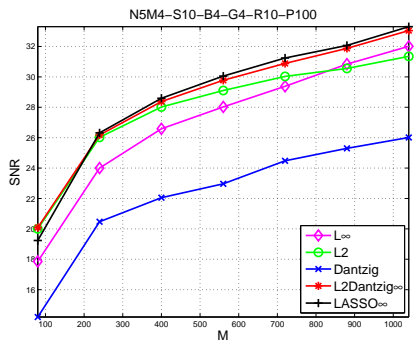}}
     \caption{Comparison among various models for fixed ratio
       $N/M=5/4$, and fixed values $S=10$, $B=4$, $G=0.4$, $R=15$, and
       $P=100\%$. The graph shows the number of measurements $M$
       (horizontal axis) against SNR (vertical axis) for values of $M$
       between $100$ and $1680$, averaged over $30$ trials for each
       combination of parameters.}
    \label{fig_M2}
\end{figure}

In Figure~\ref{fig_B} we examine the effect of the number of bits
$B$ on SNR, for fixed values of $N$, $M$, $S$, $G$, $R$, and $P$.
The fidelity of the solution from all models increases linearly with
$B$, with the \lassoinf, \ltwodantziginf, and \ltwo models being
slightly better than the alternatives.

\begin{figure}[h]
  \centering
     \subfigure{\includegraphics[scale=0.31]{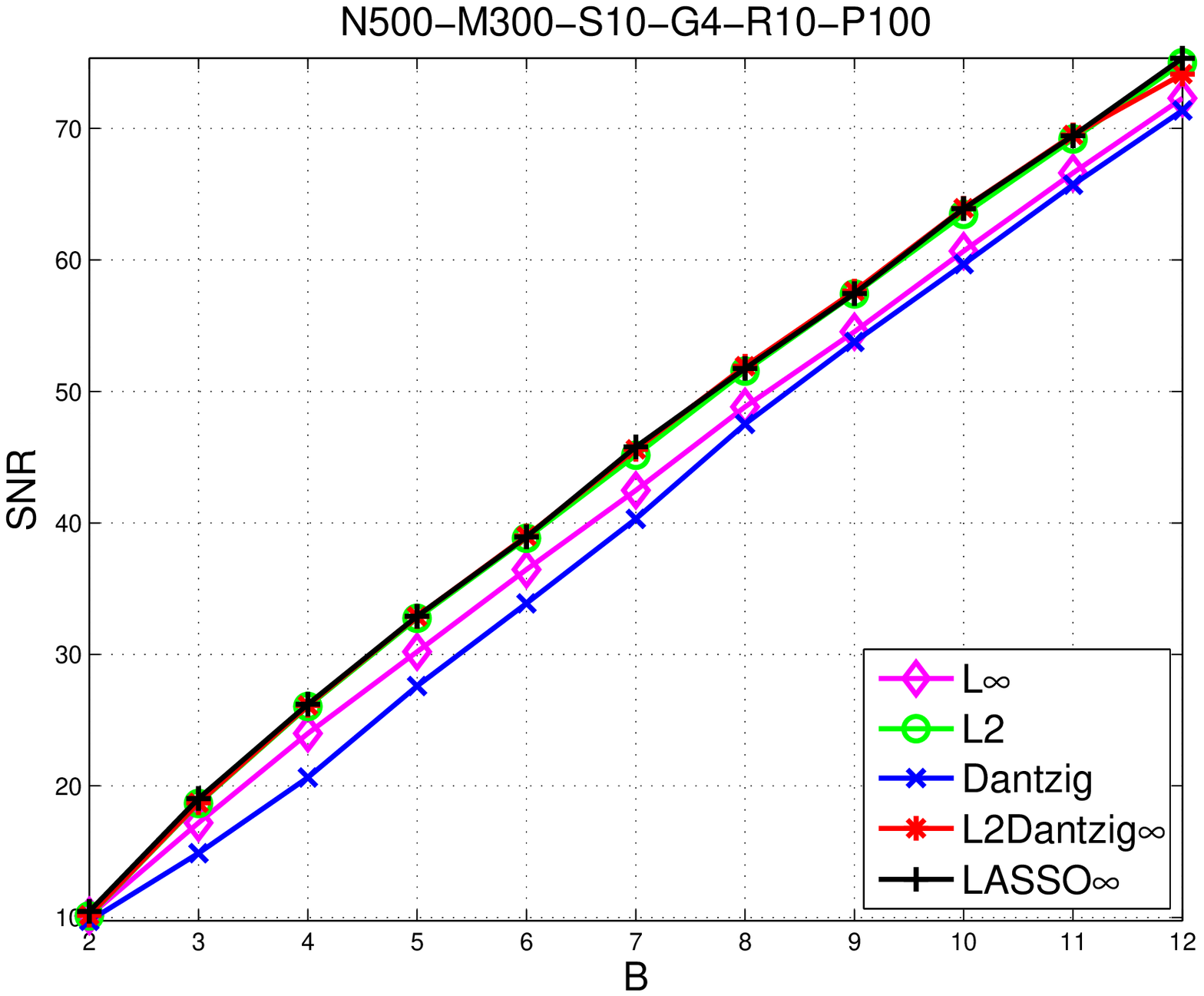}}
     \caption{Comparison among various models for fixed values $N=500$,
$M=300$, $S=10$, $G=0.4$, $R=10$, and $P=100\%$. This graph shows
the bit number $B$ (horizontal axis) against SNR (vertical axis),
averaged over $30$ trials.}
    \label{fig_B}
\end{figure}

Next we examine the effect on SNR of the confidence level, for fixed
values of $N$, $M$, $B$, $G$, and $R$. In Figure~\ref{fig_P1}, we set
$M=300$ and plot results for two values of $S$: 5 and 15. In
Figure~\ref{fig_P2}, we use the same values of $S$, but set $M=150$
instead.  Note first that the confidence level does not affect the
solution of the \linf model, since this is a deterministic model, so
the reconstruction errors are constant for this model. For the other
models, we generally see degradation as confidence is higher, since
the constraints \eqref{eqn_full.L2} and \eqref{eqn_full.Dantzig} are
looser, so the feasible point that minimizes the objective $\| \cdot
\|_1$ is further from the optimum $x^*$. Again, we see a clear
advantage for \lassoinf when the sparsity is low, $M$ is larger, and
the confidence level $P$ is high. For less sparse solutions, the
\ltwo, \ltwodantziginf, and \lassoinf models have similar or better
performance.  In addition, we find that \lassoinf is more robust to
the choice of confidence parameter than other methods (see also
Figure~\ref{fig_PG}), although this feature of the method is not
evident from our theoretical analysis.


\begin{figure*}[h]
  \centering
    \subfigure{\includegraphics[scale=0.31]{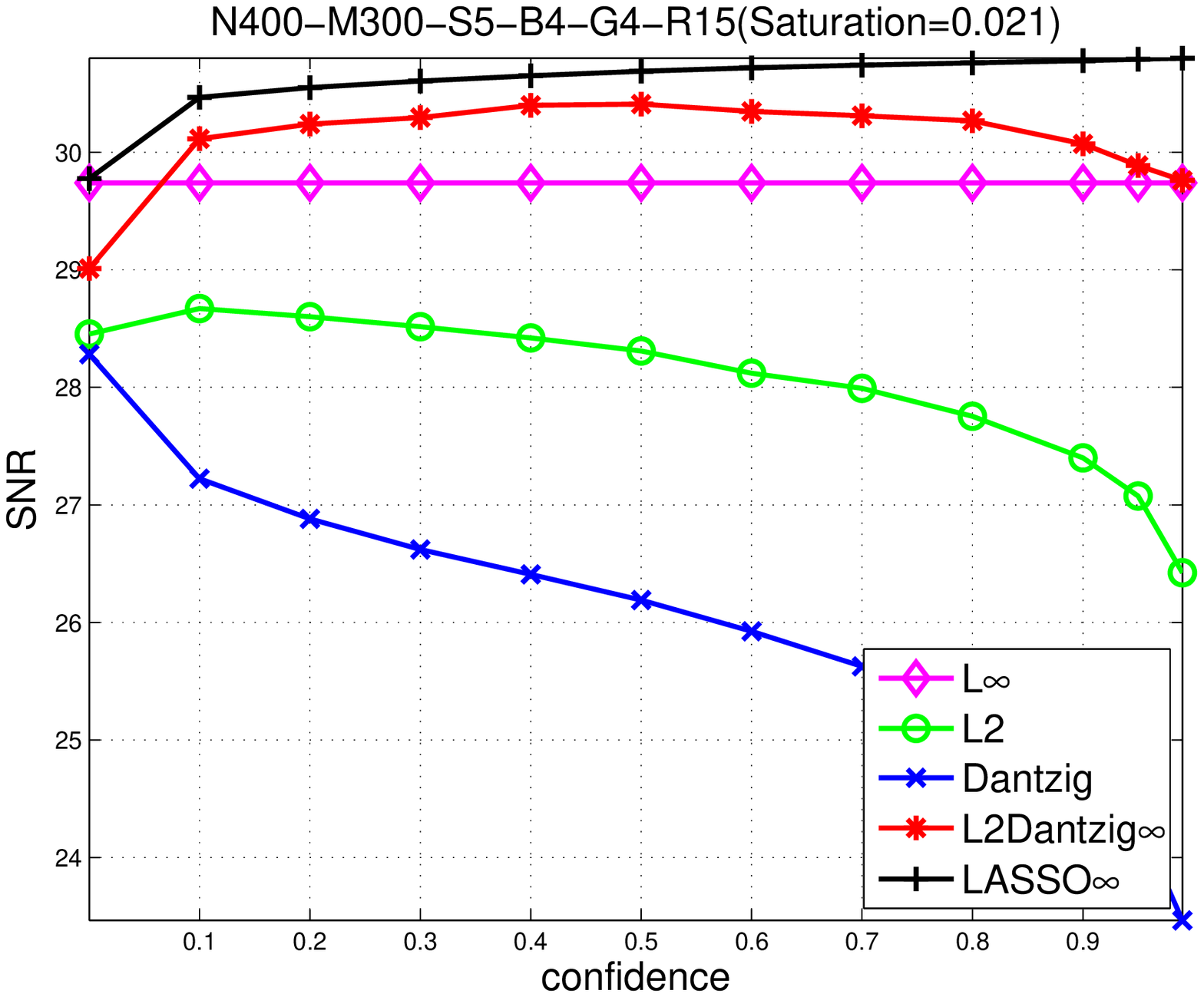}}
    \subfigure{\includegraphics[scale=0.31]{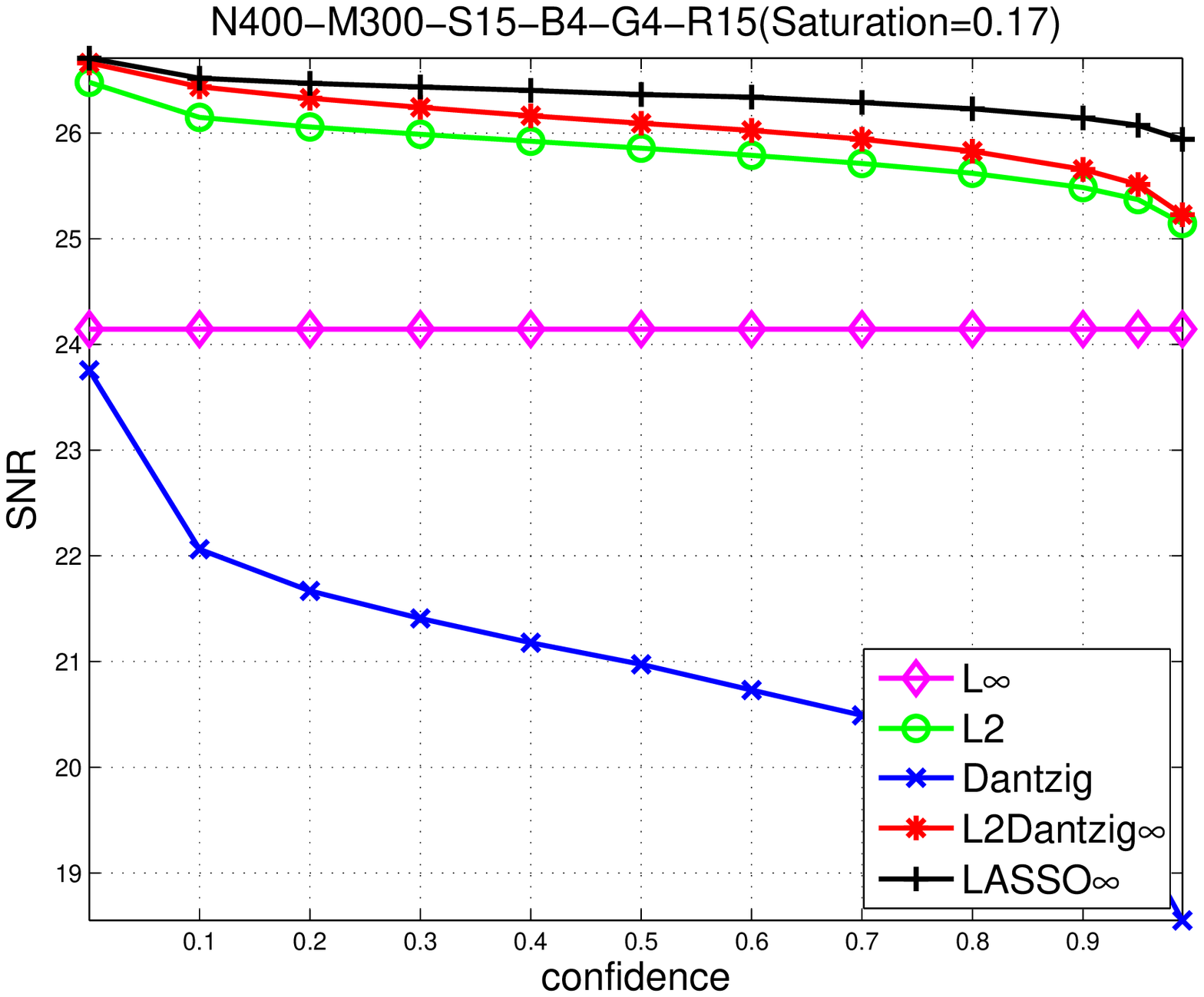}}
    \caption{Comparison among various models for fixed values $N=400$,
      $M=300$, $B=4$, $G=0.4$, and $R=15$, and sparsity levels $S=5$
      and $S=15$.  The graphs show saturation bound $G$ (horizontal
      axis) against SNR (vertical axis) for values of $P$ between
      $0.0001$ and $0.99$, averaged over $30$ trials for each
      combination of parameters.}
    \label{fig_P1}
\end{figure*}

\begin{figure*}[h]
  \centering
    \subfigure{\includegraphics[scale=0.31]{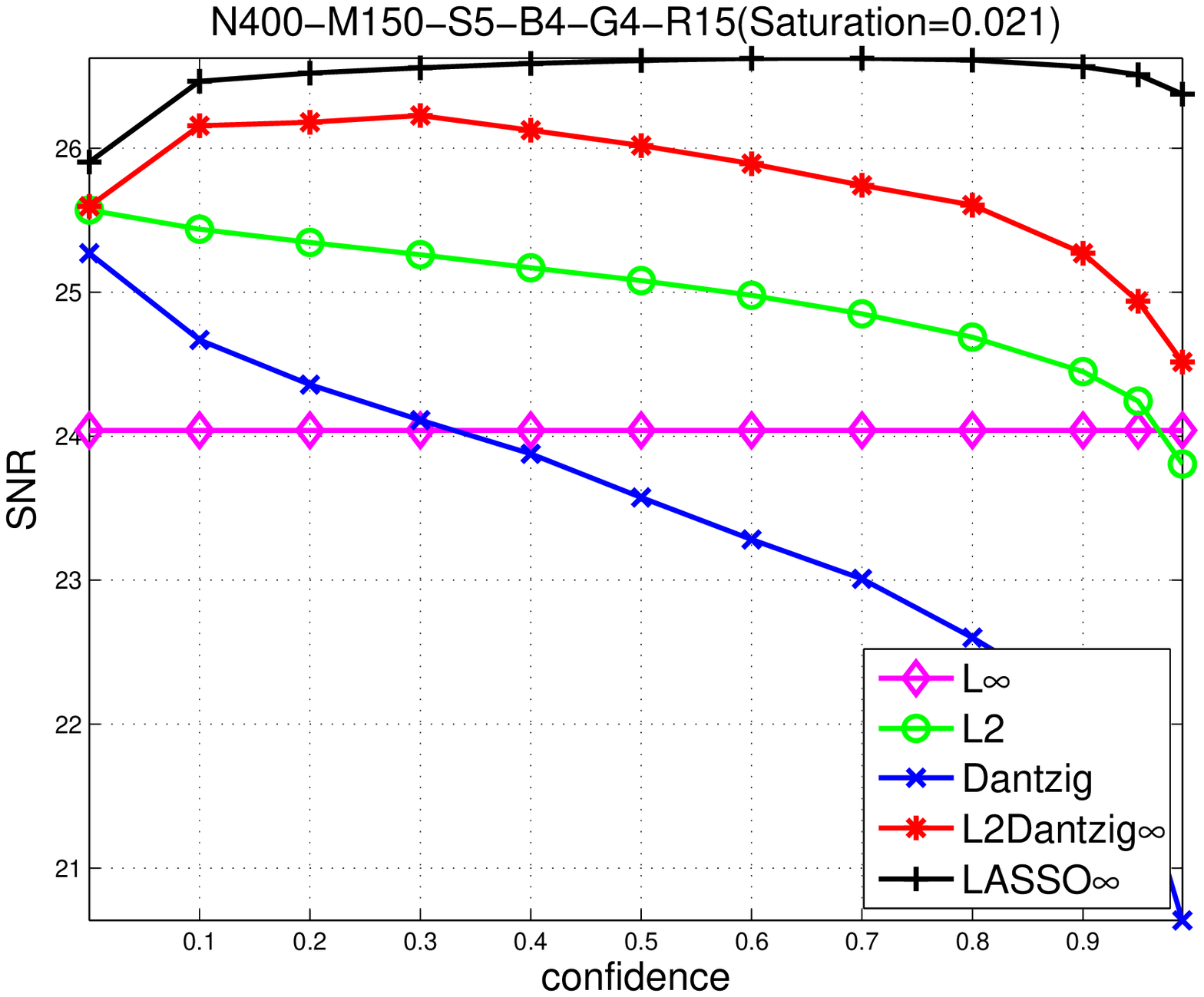}}
    \subfigure{\includegraphics[scale=0.31]{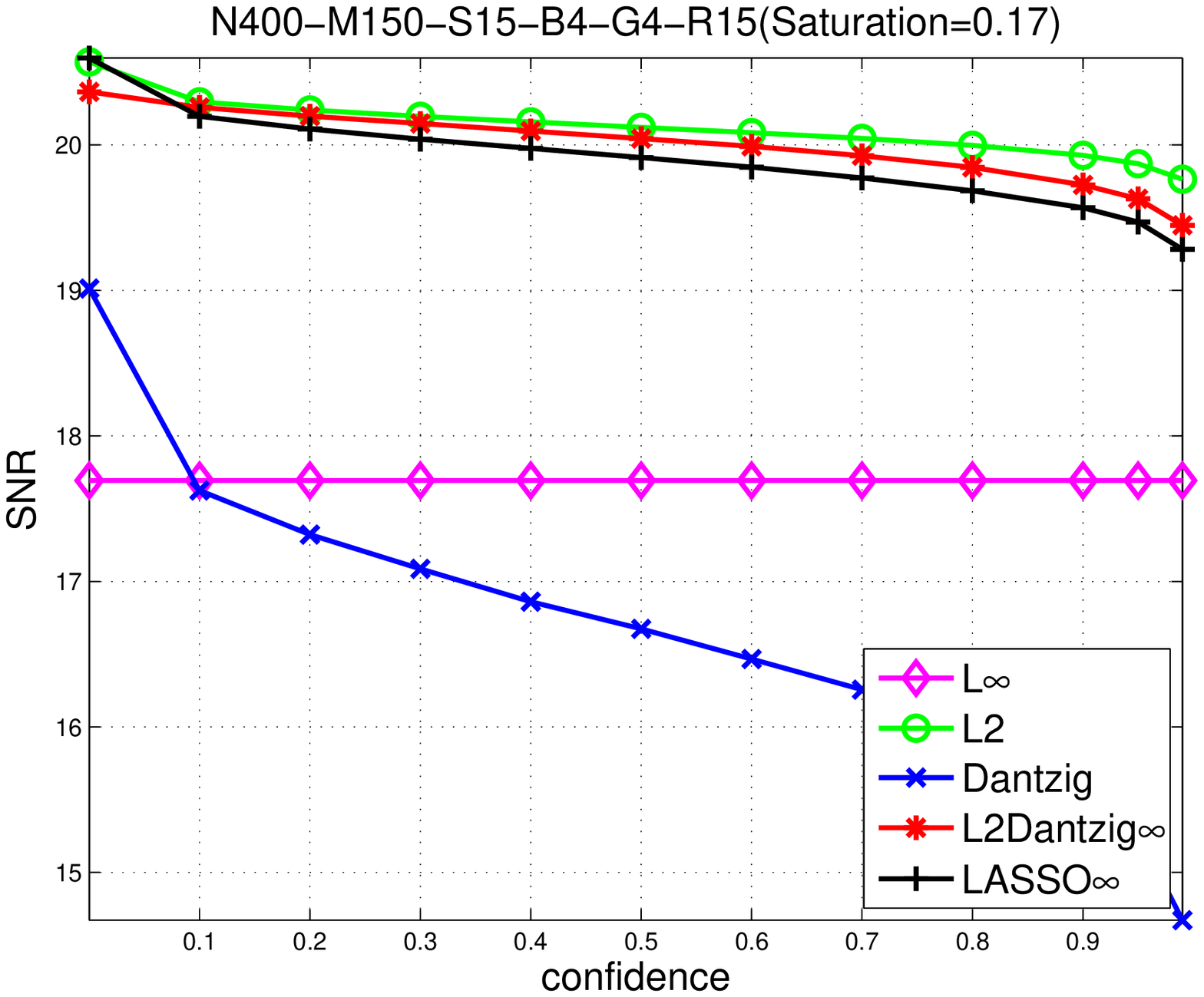}}
\caption{Comparison among various models for fixed values $N=400$,
  $M=150$, $B=4$, $G=0.4$, and $R=15$, and sparsity levels $S=5$ and
  $S= 15$.  The graphs show confidence $P$ (horizontal axis) against
  SNR (vertical axis) for values of $P$ between $0.0001$ and $0.99$,
  averaged over $30$ trials for each combination of parameters.}
    \label{fig_P2}
\end{figure*}

In Figure~\ref{fig_GS} we examine the effect of saturation bound $G$
on SNR. We fix $N$, $M$, $B$, $R$, and $P$, and try two values of $S$:
$5$ and $10$.
A tradeoff is evident --- the reconstruction performances are not
monotonic with $G$. As $G$ increases, the proportion of saturated
measurements drops sharply, but the quantization interval also
increases, degrading the quality of the measured observations. We
again note a slight advantage for the \lassoinf and \ltwodantziginf
models, with very similar performance by \ltwo when the oversampling
is lower.


\begin{figure*}[h]
  \centering
    \subfigure{\includegraphics[scale=0.31]{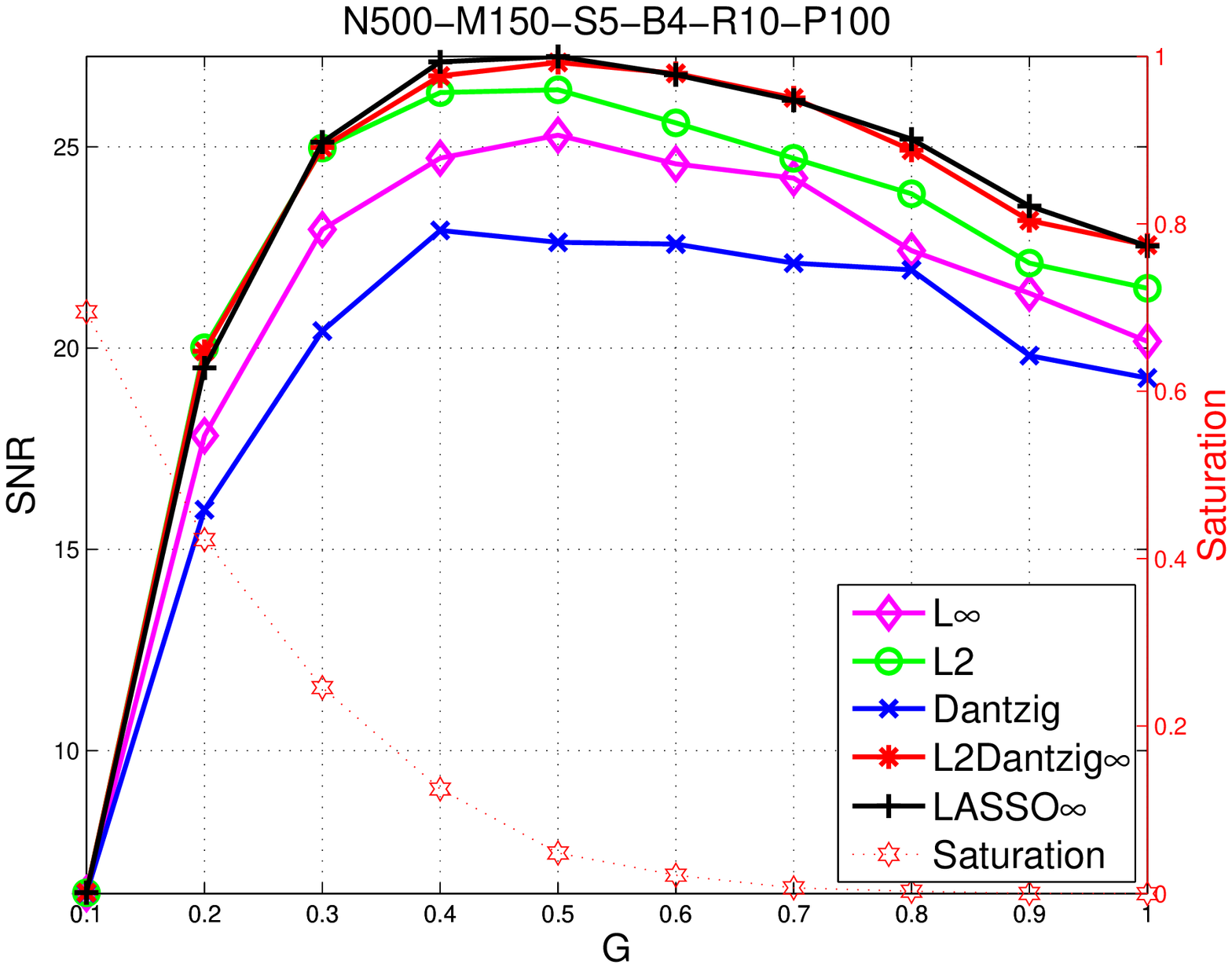}}
    \subfigure{\includegraphics[scale=0.31]{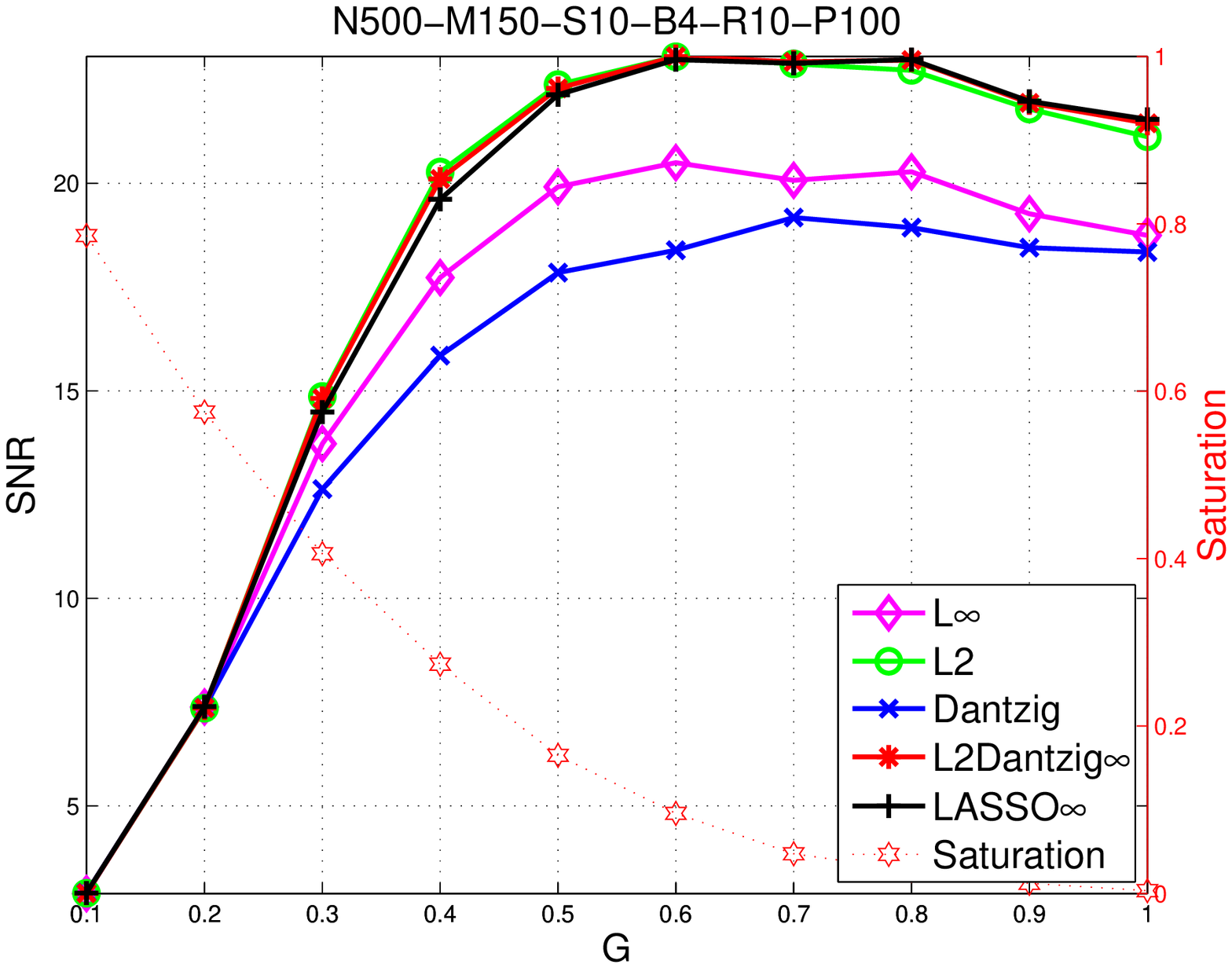}}
    \caption{Comparison among various models for fixed values of $N=500$, $M=150$,
      $B=4$, $R=15$, $P=100\%$, and two values of $S$: $5$ and $10$. 
      The graphs show confidence $P$ (horizontal axis) against
      SNR (left vertical axis) and saturation ratio (right vertical axis), averaged
      over $30$ trials for each combination of parameters.}
\label{fig_GS}
\end{figure*}

In Figure~\ref{fig_PG}, we fix $N$, $M$, $S$, $B$, $R$, and tune the
value of $G$ to achieve  specified saturation ratios of $2\%$ and $10\%$.
We plot SNR against the confidence level $P$, varied from $0\%$ to
$100\%$.  Again, we see generally good performance from the \lassoinf
and \ltwodantziginf models, with \ltwo being competitive for less
sparse solutions.

\begin{figure*}[h]
  \centering
    \subfigure{\includegraphics[scale=0.31]{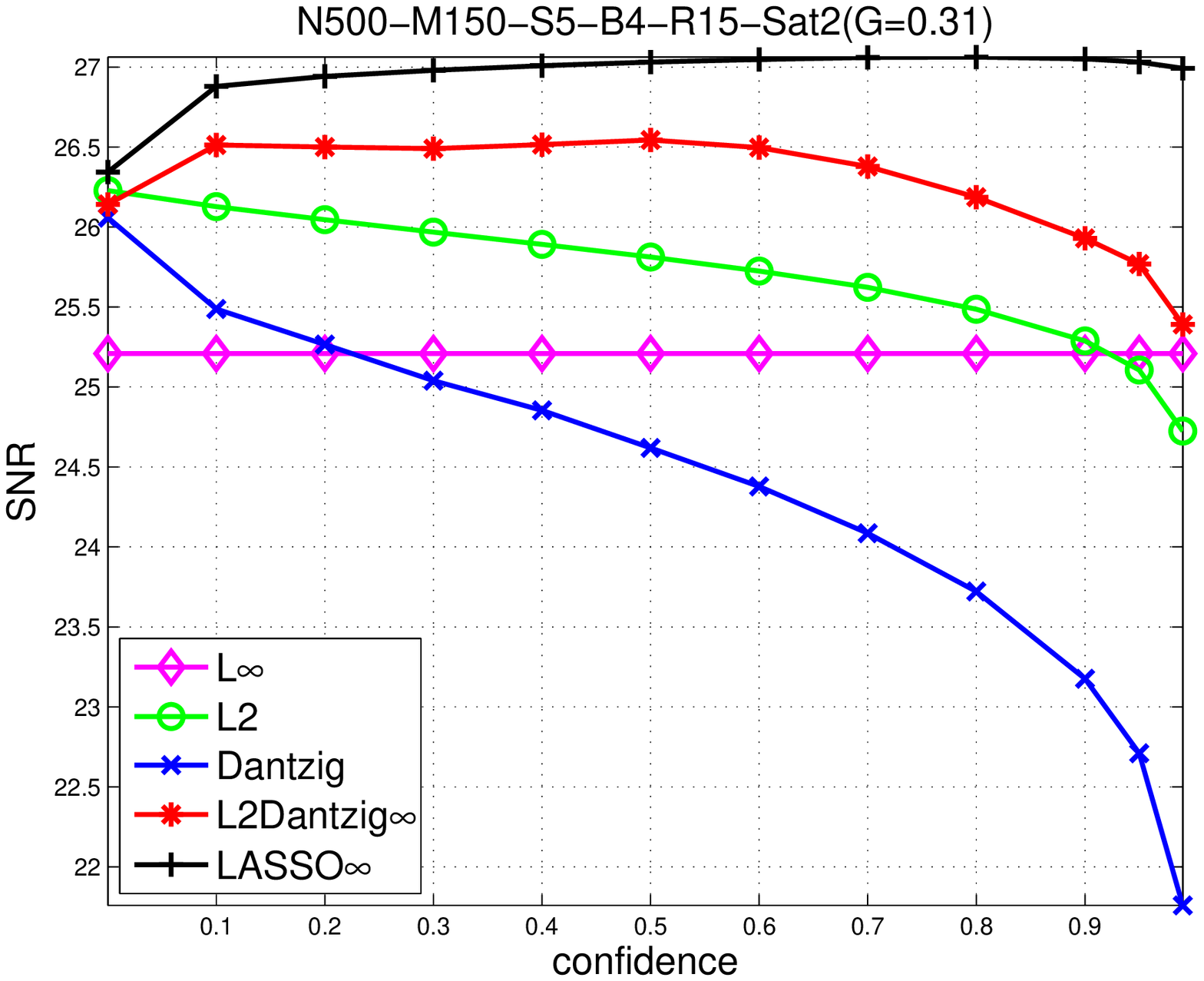}}
    \subfigure{\includegraphics[scale=0.31]{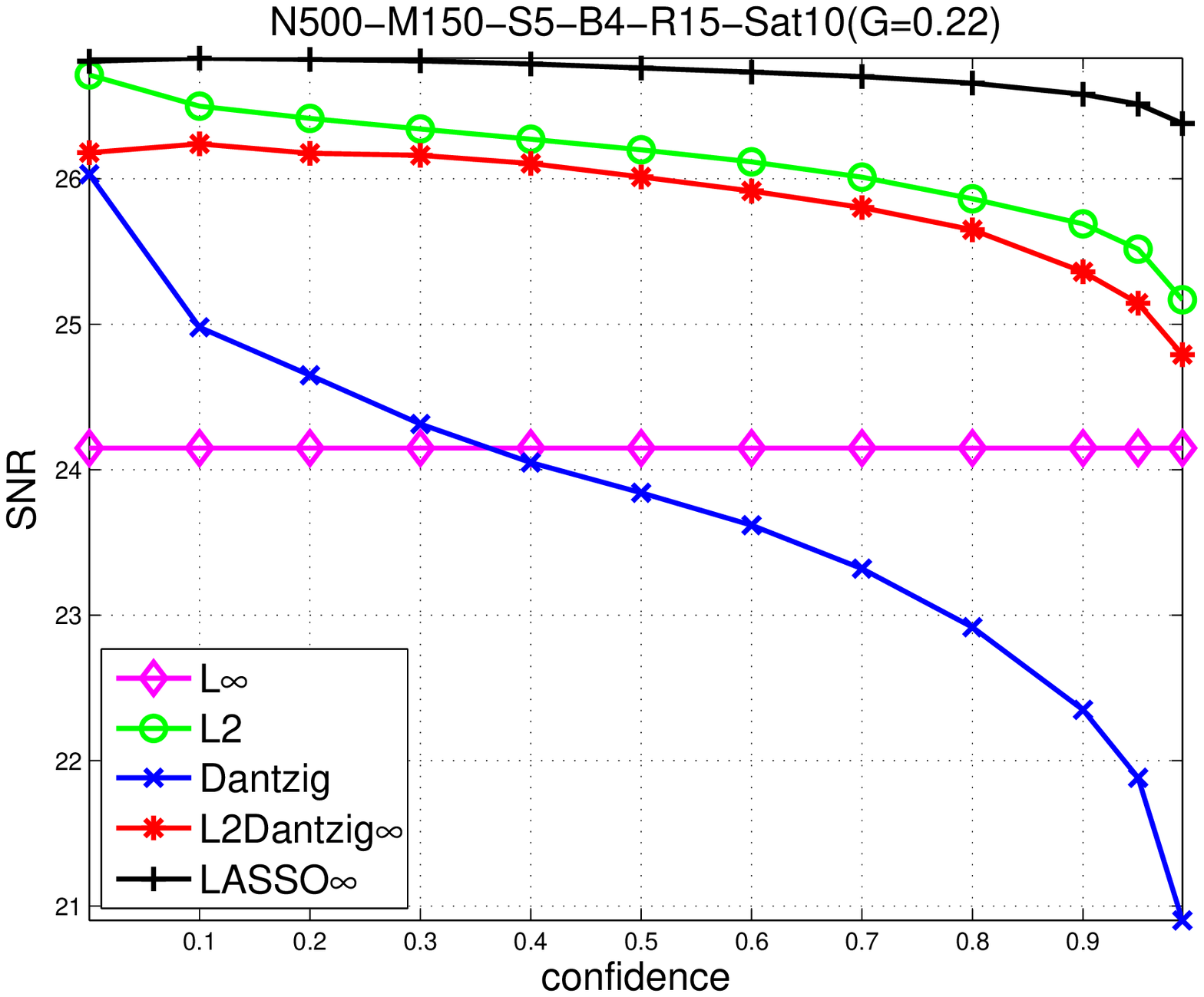}}
    \caption{Comparison among various models for fixed values of
      $N=500$, $M=150$, $S=5$, $B=4$, $R=15$, and two values of
      saturation ratio: $2\%$ and $10\%$, which are achieved by tuning
      the value of $G$. The graphs show confidence $P$ (horizontal
      axis) against SNR (vertical axis), averaged over $30$ trials for
      each combination of parameters.}
    \label{fig_PG}
\end{figure*}

Summarizing, we note the following points.
\begin{itemize}
\item[(a)] Our proposed \lassoinf formulation gives either best or
  equal-best reconstruction performance in most regimes, with a more
  marked advantage when the signal is highly sparse and the number of
  samples is higher.
\item[(b)] The \ltwo model has similar performance to the full model,
  and is even slightly better than our model for less sparse signals
  with fewer measurements, since it is not sensitive to the
  measurement number as the upper bound suggested
  by~\cite{Laska11}. Although the inequality
  in~\eqref{eqn_boundsimple} also indicates the estimate error by our
  model is bounded by a constant due to the $\ell_\infty$ constraint,
  the error bound determined by the $\ell_\infty$ constraint is not as
  tight as the $\ell_2$ constraint in general. This fact is evident
  when we compare the the \linf model with the \ltwo
  model. \sjwresolved{There is no $B_2$ in Theorem 1 or in fact
    anywhere in the paper, so we had better rewrite this item. Can you
    do this? . {\rca I rewrote this part}}
\item[(c)] The \linf model performs well (and is competitive with the
  others) when the number of unsaturated measurements is relatively
  large.
\item[(d)] The \ltwodantziginf model is competitive with \lassoinf if
  $\epsilon$ and $\lambda$ can be determined from the true signal
  $x^*$. Otherwise, \lassoinf is more robust to choices of these
  parameters that do not require knowledge of the true signals,
  especially if a high confidence level is desired.\sjwresolved
{I am
    not clear about this argument. Doesn't \lassoinf also need us to
    supply $\lambda$, and didn't you use the ``nice'' simulated value
    for it here? So how do we know it is more robust to the choice of
    $\lambda$? {\rca \lassoinf needs to set $\lambda$ like the full
      model. ``$\epsilon$ and $\lambda$ can be exactly determined''
      means that they are determined by the true signal $x^*$. I want
      to say if one chooses $\epsilon$ and $\lambda$ using simulation,
      the \lassoinf model is much robust than the full model
      especially when a high confidence level is desired.}}
\end{itemize}


\section{Conclusion}
\label{conclusion} We have analyzed a formulation of the
reconstruction problem from compressed sensing in which the
measurements are quantized to a finite number of possible values.
Our formulation uses an objective of $\ell_2$-$\ell_1$ type, along
with explicit constraints that restrict the individual quantization
errors to known intervals. We obtain bounds on the estimation error,
and estimate these bounds for the case in which the sensing matrix
is Gaussian. Finally, we prove the practical utility of our
formulation by comparing with an approach that has been proposed
previously, along with some variations on this approach that attempt
to distil the relative importance of different constraints in the
formulation.

\section*{Acknowledgments}
The authors acknowledge support of National Science Foundation Grant
DMS-0914524 and a Wisconsin Alumni Research Foundation 2011-12 Fall
Competition Award. The authors are also grateful to the editor and
three referees whose constructive comments on the first version led to
improvements in the manuscript.

\appendix
\section{}

This section contains the proof to a more general form of
Theorem~\ref{thm_main}, developed via a number of technical lemmas. At
the end, we state and prove a result (Theorem~\ref{thm_rhobound})
concerning high-probability estimates of the bounds under additional
assumptions on the sensing matrix $\tp$.

Theorem~\ref{thm_main} is a corollary of the following more general
result.
\begin{theorem} \label{thm_main_extend}
Assume that the true signal $x^*$ satisfies
\begin{equation}\label{eqn_feasible_extend}
\|\tp^T(\tp x^* - \tilde{y})\|_\infty \leq \lambda\Delta /2,
\end{equation}
for some value of $\lambda$. Let $s$ and $l$ be positive integers in
the range $1,2,\dotsc,N$, and define
\begin{subequations}
\begin{align}
\label{eq:def.A0.extend}
A_0(\Psi):=&{\rho}^-(s+l, \Psi)-3{\sqrt{s/l}} \left[ {\rho}^+(s+2l, \Psi)-{\rho}^-(s+2l, \Psi) \right]\\
\label{eq:def.A1.extend}
A_1(\Psi):=&4[{\rho}^+(s+2l, \Psi)-{\rho}^-(s+2l,\Psi)],\\
C_1(\Psi):=&4+{\sqrt{(1+9s/l)}A_1(\Psi)}/{A_0(\Psi)},\label{eq:def.C1.extend}\\
C_2(\Psi):=&\sqrt{(1+9s/l) / A_0(\Psi)} \label{eq:def.C2.extend}.
\end{align}
\end{subequations}
\sjwresolved{I don't think $B_S$ appears in the analysis any more.
Can
  we delete it? {\rca Yes, you are right. It should not appear in the
    theorem. Actually, I found a mistake about the bound caused by the
    saturation constraint in this morning. I deleted the inequality
    about it, but have no time to check if there is something
    inconsistent.} Have you checked and is it OK now? {\rca Yes, I have deleted all $B_S$ in this draft.}}
We have that for any $T_0\subset \{1,2,...,N\}$ with $s=|T_0|$, if
$A_0(\tp)>0$, then
\begin{subequations}
\begin{align}
\|h\| \leq& \frac{6C_2(\tp)^2\sqrt{s}\lambda\Delta}{\sqrt{1+9s/l}} + \frac{C_1(\tp)}{\sqrt{l}}  \|x^*_{T_0^c}\|_1 +
2.5{C_2(\tp)}\sqrt{\lambda \Delta \|x_{T_0^c}^*\|_1},
\label{eqn_thm:LASSO.1_extend} \\
\|h\|\leq & C_2(\tp)\sqrt{\tilde{M}}\Delta + \frac{C_1(\tp)}{\sqrt{l}} \|x^*_{T_0^c}\|_1.
\label{eqn_thm:LASSO.2_extend}
\end{align}
\end{subequations}

Suppose that Assumption~\ref{ass:1} holds, and let $\pi \in (0,1)$
be given. If we define $\lambda = \sqrt{2\log {2N/\pi}}\fmax$
in~\eqref{eqn_ourformulation}, then with probability at least
$P=1-\pi$, the inequalities \eqref{eqn_thm:LASSO.1_extend} and
\eqref{eqn_thm:LASSO.2_extend} hold.
\end{theorem}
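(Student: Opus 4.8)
The plan is to separate the argument into a deterministic part, which establishes \eqref{eqn_thm:LASSO.1_extend} and \eqref{eqn_thm:LASSO.2_extend} under the feasibility hypothesis \eqref{eqn_feasible_extend}, and a probabilistic part, which verifies \eqref{eqn_feasible_extend} for the stated $\lambda$. Throughout, write $g := \tp x^* - \tilde{y}$ for the vector of non-saturated quantization errors, so that $\tp h = (\tp\hat{x}-\tilde{y}) - g$ and $\|g\|_\infty \le \Delta/2$. First I would derive a cone condition on $h$. Since $\hat{x}$ minimizes the objective in \eqref{eqn_ourformulation} and $x^*$ is feasible, comparing objective values yields $\tfrac12\|\tp h\|^2 + \langle h, \tp^T g\rangle \le \lambda\Delta(\|x^*\|_1 - \|\hat{x}\|_1)$. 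Bounding $|\langle h, \tp^T g\rangle| \le \|h\|_1\|\tp^T g\|_\infty \le \tfrac{\lambda\Delta}{2}\|h\|_1$ via \eqref{eqn_feasible_extend}, and splitting the $\ell_1$ terms across $T_0$ and $T_0^c$, I expect to obtain both a cone condition of the form $\|h_{T_0^c}\|_1 \le 3\|h_{T_0}\|_1 + 4\|x^*_{T_0^c}\|_1$ and an energy inequality controlling $\tfrac12\|\tp h\|^2$ by $\tfrac{3\lambda\Delta}{2}\|h_{T_0}\|_1 + 2\lambda\Delta\|x^*_{T_0^c}\|_1$.

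Next comes the restricted-eigenvalue machinery, which is the technical heart. Order the coordinates of $h_{T_0^c}$ by decreasing magnitude and partition them into successive blocks $T_1, T_2, \dotsc$ of size $l$, and set $T_{01} = T_0 \cup T_1$. The standard estimate $\sum_{j\ge 2}\|h_{T_j}\| \le l^{-1/2}\|h_{T_0^c}\|_1$, combined with the cone condition, lets me control both $\|h\| \le \|h_{T_{01}}\| + \sum_{j\ge 2}\|h_{T_j}\|$ and the tail by $\|h_{T_{01}}\|$ and $\|x^*_{T_0^c}\|_1$. Using the polarization bound $|\langle \tp u, \tp v\rangle| \le \tfrac12(\rho^+(|T|+|T'|,\tp)-\rho^-(|T|+|T'|,\tp))\|u\|\|v\|$ for $u,v$ supported on disjoint sets $T,T'$, I would lower-bound $\langle \tp h_{T_{01}}, \tp h\rangle$ by $\rho^-(s+l,\tp)\|h_{T_{01}}\|^2$ minus cross-term corrections over the blocks $T_j$; since each such block joins $T_{01}$ in a set of size $s+2l$, these corrections involve $\rho^\pm(s+2l,\tp)$ weighted by $\sqrt{s/l}$, which is exactly why $A_0(\tp)$ and $A_1(\tp)$ take the forms in \eqref{eq:def.A0.extend}--\eqref{eq:def.A1.extend}.

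To produce the two bounds I would feed two different upper bounds on $\langle \tp h_{T_{01}}, \tp h\rangle$ into this lower bound. For \eqref{eqn_thm:LASSO.1_extend}, I use the energy inequality from the optimality comparison (the ``LASSO'' route): substituting $\|h_{T_0}\|_1 \le \sqrt{s}\|h_{T_{01}}\|$ and solving the resulting quadratic in $\|h_{T_{01}}\|$ gives the leading term proportional to $C_2(\tp)^2\sqrt{s}\lambda\Delta$, with the $\sqrt{\lambda\Delta\|x^*_{T_0^c}\|_1}$ term arising as the square-root branch of that quadratic and $C_1(\tp)$ collecting the tail contributions. For \eqref{eqn_thm:LASSO.2_extend}, I instead exploit the $\ell_\infty$ constraint directly: since both $\hat{x}$ and $x^*$ satisfy \eqref{eq_Linfty_cst}, we have $\|\tp h\|_\infty \le \Delta$, hence $\|\tp h\| \le \sqrt{\tilde{M}}\Delta$; plugging this into the same restricted-eigenvalue conversion $\|h\| \le C_2(\tp)\|\tp h\| + C_1(\tp)l^{-1/2}\|x^*_{T_0^c}\|_1$ yields the second estimate. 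I expect the careful bookkeeping of the cross terms and the exact extraction of the constants $A_0, A_1, C_1, C_2$ to be the main obstacle; the underlying inequalities are routine, but matching the stated numerical coefficients demands precision.

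Finally, for the probabilistic claim, observe that the $k$th entry of $\tp^T g$ is $\langle \tp_k, g\rangle = \sum_i \tp_{ik} g_i$, a sum of independent, mean-zero terms by Assumption~\ref{ass:1}, with $|\tp_{ik} g_i| \le |\tp_{ik}|\Delta/2$. Hoeffding's inequality then gives $\mathbb{P}(|\langle \tp_k, g\rangle| \ge \lambda\Delta/2) \le 2\exp(-\lambda^2/(2\|\tp_k\|^2)) \le 2\exp(-\lambda^2/(2\fmax^2))$, using \eqref{eq:fmax}. With $\lambda = \sqrt{2\log(2N/\pi)}\,\fmax$ this probability is at most $\pi/N$, so a union bound over the $N$ columns shows that \eqref{eqn_feasible_extend} fails with probability at most $\pi$. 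On the complementary event of probability at least $1-\pi$, the deterministic bounds of the first part apply, which completes the proof.
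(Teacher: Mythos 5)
Your proposal is correct in substance and, at the level of architecture, identical to the paper's proof: cone condition, block decomposition with $T_{01}=T_0\cup T_1$, cross-term (polarization) control via $\rho^\pm$, two alternative upper bounds on the residual energy (one from optimality, one from $\|\tp h\|_\infty\le\Delta$, which is the paper's Lemma~\ref{lem_diff}), solution of a quadratic inequality in $\|h_{T_{01}}\|$, and Hoeffding plus a union bound for the probabilistic claim (your last paragraph is essentially verbatim the paper's Lemma~\ref{lem:feasible}). The one genuinely different step is your energy inequality: you compare objective values at $\hat{x}$ and $x^*$ directly, obtaining $\|\tp h\|^2\le 3\lambda\Delta\|h_{T_0}\|_1+4\lambda\Delta\|x^*_{T_0^c}\|_1$, whereas the paper's Lemma~\ref{lem_right.LASSO} invokes the KKT conditions of the constrained problem, with an element $n$ of the normal cone of the feasible set and the observation $-h^Tn\le 0$, to get $\|\tp h\|^2\le\frac{3}{2}\lambda\Delta\|h\|_1\le 6\sqrt{s}\lambda\Delta\|h_{T_{01}}\|+6\lambda\Delta\|x^*_{T_0^c}\|_1$. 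Your route avoids the normal-cone machinery entirely (feasibility of $x^*$ plus optimality of $\hat{x}$ suffice) and yields smaller constants ($3$, $4$, and eventually $2C_2(\tp)$ in place of $6$, $6$, and $\sqrt{6}\,C_2(\tp)\le 2.5\,C_2(\tp)$), so it proves a marginally stronger statement that still implies \eqref{eqn_thm:LASSO.1_extend} and \eqref{eqn_thm:LASSO.2_extend}.

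One imprecision to repair: state the restricted-eigenvalue step for $\|\tp h\|^2$, not for $\langle\tp h_{T_{01}},\tp h\rangle$. First, the quantities you later feed in are upper bounds on $\|\tp h\|^2$, and $\langle\tp h_{T_{01}},\tp h\rangle\le\|\tp h\|^2$ fails in general (the discarded term $\langle\tp h_{T_{01}^c},\tp h\rangle$ may be negative), so the two halves of your argument would not connect as written. Second, in the inner-product form each cross term appears once rather than twice, so the corrections would be $\frac{3}{2}\sqrt{s/l}\,[\rho^+-\rho^-]$ and $2[\rho^+-\rho^-]$, not ``exactly'' the $A_0$, $A_1$ of \eqref{eq:def.A0.extend}--\eqref{eq:def.A1.extend} as you claim. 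The fix is what the paper does in Lemma~\ref{lem_left.LASSO}: expand $\|\tp h\|^2=\|\tp_{T_{01}}h_{T_{01}}\|^2+2h_{T_{01}}^T\tp_{T_{01}}^T\tp_{T_{01}^c}h_{T_{01}^c}+\|\tp_{T_{01}^c}h_{T_{01}^c}\|^2$ and drop the last (nonnegative) term; the factor $2$ on the cross terms then produces exactly the stated $A_0$ and $A_1$, and your remaining steps --- including the conversion $\|h\|\le C_2(\tp)\|\tp h\|+C_1(\tp)l^{-1/2}\|x^*_{T_0^c}\|_1$ obtained by solving the quadratic and invoking the bound $\|h\| \le \sqrt{1+9s/l}\,\|h_{T_{01}}\|+4l^{-1/2}\|x^*_{T_0^c}\|_1$ --- go through as you describe.
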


Theorem~\ref{thm_main} can be proven by setting $s=l$ in
Theorem~\ref{thm_main_extend} and defining $\bar{C}_1(\tp)$ to be
$C_1(\Psi)$ for $l=s$ and $\Psi=\tp$, and similarly for
$\bar{C}_1(\tp)$, $\bar{A}_0(\tp)$, and $\bar{A}_1(\tp)$.

The proof of Theorem~\ref{thm_main_extend} essentially follows the
standard analysis procedure in compressive sensing. Some similar
lemmas and proofs can be found in
\citet{Bickel09,Candes07a,Candes08,Zhang09a,Liu10,Liu12JMLR}. For completeness,
we include all proofs in the following discussion.

  Given the error vector $h=\hat{x}-x^*$ and the set $T_0$ (with $s$
  entries), divide the complementary index set
  $T_0^c:=\{1,2,...,N\}\backslash T_0$ into a group of subsets $T_j$'s
  ($j=1,2,\dotsc,J$), without intersection, such that $T_1$ indicates
  the index set of the largest $l$ entries of $h_{T_0^c}$, $T_2$
  contains the next-largest $l$ entries of $h_{T_0^c}$, and so
  forth.\footnote{The last subset may contain fewer than $l$
    elements.}

\begin{lemma}
We have
\begin{align}
\|\tp h\|_\infty &\leq \Delta. \label{eq:lem1.1}
\end{align}
\label{lem_diff}
\end{lemma}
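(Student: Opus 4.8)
The plan is to obtain \eqref{eq:lem1.1} directly from the $\ell_\infty$ constraint \eqref{eq_Linfty_cst} together with feasibility of the true signal, using nothing more than the triangle inequality for the $\ell_\infty$ norm. The key observation is that both the estimator $\hat{x}$ and the true signal $x^*$ satisfy the same constraint $\|\tp x - \tilde{y}\|_\infty \leq \Delta/2$, so their difference $h = \hat{x} - x^*$ cancels the $\tilde{y}$ term and leaves a residual controlled by twice the bound.

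First I would record that $\hat{x}$, being the minimizer of \eqref{eqn_ourformulation}, is in particular feasible, hence $\|\tp \hat{x} - \tilde{y}\|_\infty \leq \Delta/2$ by \eqref{eq_Linfty_cst}. Second, I would note that $x^*$ is feasible as well: as remarked after Assumption~\ref{ass:1}, the quantization errors on the unsaturated measurements, which are exactly the components of $\tp x^* - \tilde{y}$, are bounded uniformly by $\Delta/2$, so $\|\tp x^* - \tilde{y}\|_\infty \leq \Delta/2$. Third, writing
\[
\tp h = \tp(\hat{x} - x^*) = (\tp \hat{x} - \tilde{y}) - (\tp x^* - \tilde{y}),
\]
the triangle inequality gives
\[
\|\tp h\|_\infty \leq \|\tp \hat{x} - \tilde{y}\|_\infty + \|\tp x^* - \tilde{y}\|_\infty \leq \frac{\Delta}{2} + \frac{\Delta}{2} = \Delta,
\]
which is precisely \eqref{eq:lem1.1}.

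There is no substantive obstacle here; the result is a one-line consequence of feasibility of both points. The only point deserving care is justifying that $x^*$ itself lies in the feasible region, since the whole development of Theorem~\ref{thm_main_extend} is predicated on this. That justification rests on the bound $\Delta/2$ for the per-component quantization error on unsaturated measurements, which is an intrinsic property of the rounding model in \eqref{eq:qcs.values} rather than something requiring the probabilistic condition \eqref{eqn_feasible_extend}. I would therefore flag explicitly that this lemma uses only the deterministic $\ell_\infty$ feasibility of $x^*$, not the Dantzig-type condition, so that it is available for use at any later stage of the argument.
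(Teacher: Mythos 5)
Your proof is correct and is essentially identical to the paper's: both deduce $\|\tp \hat{x}-\tilde{y}\|_\infty \leq \Delta/2$ and $\|\tp x^*-\tilde{y}\|_\infty \leq \Delta/2$ from feasibility and apply the triangle inequality. Your added remark that the feasibility of $x^*$ for the $\ell_\infty$ constraint is deterministic (from the rounding model) rather than dependent on the probabilistic condition \eqref{eqn_feasible_extend} is accurate and a useful clarification, though the paper leaves it implicit.
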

\begin{proof}
From \eqref{eq_Linfty_cst}, and invoking feasibility of $\hat{x}$
and $x^*$, we obtain
\[
\|\tp h\|_\infty =\|\tp (\hat{x}-x^*)\|_\infty
 \leq \|\tp \hat{x}-\tilde{y}\|_\infty + \|\tp x^*-\tilde{y}\|_\infty \leq \Delta.
\]
\end{proof}

\begin{lemma} \label{lem:feasible}
Suppose that Assumption~\ref{ass:1} holds. Given $\pi \in (0,1)$,
the choice $\lambda = \sqrt{2\log {(2N/\pi)}}\fmax$ ensures that the
true signal $x^*$ satisfies \eqref{eqn_feasible_extend}, that is
\[
  \|\tp^T(\tp x^* - \tilde{y})\|_\infty \leq \lambda\Delta /2
  \label{eqn_lemma:assumption}
\]
with probability at least $1-\pi$.
\end{lemma}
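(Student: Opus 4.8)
The plan is to reduce the $\ell_\infty$ bound to a componentwise tail estimate followed by a union bound. Write $\xi := \tp x^* - \tilde{y}$ for the vector of quantization errors; by Assumption~\ref{ass:1} its entries $\xi_1,\dots,\xi_{\tilde{M}}$ are independent with mean zero, and (since $\tp$ and $\tilde{y}$ describe unsaturated data) each satisfies $|\xi_j|\le \Delta/2$. The quantity to be controlled is $\|\tp^T\xi\|_\infty = \max_{1\le i\le N}|\tp_i^T\xi|$, so it suffices to bound the single-column failure probability $\mathbb{P}(|\tp_i^T\xi| > \lambda\Delta/2)$ by $\pi/N$ and then take a union bound over the $N$ columns.

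For a fixed column $i$, the inner product $\tp_i^T\xi = \sum_{j=1}^{\tilde{M}} (\tp)_{ji}\,\xi_j$ is a sum of independent, mean-zero random variables, each lying in the interval $[-|(\tp)_{ji}|\Delta/2,\ |(\tp)_{ji}|\Delta/2]$. This is exactly the setting for Hoeffding's inequality, which gives
\[
\mathbb{P}\bigl(|\tp_i^T\xi| \ge t\bigr) \le 2\exp\!\left(-\frac{2t^2}{\Delta^2\sum_{j}(\tp)_{ji}^2}\right) = 2\exp\!\left(-\frac{2t^2}{\Delta^2\|\tp_i\|^2}\right).
\]
Setting $t = \lambda\Delta/2$ and bounding the column norm by $\|\tp_i\|\le \fmax$ from \eqref{eq:fmax}, the exponent simplifies to $-\lambda^2/(2\fmax^2)$, independent of $i$.

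The choice $\lambda = \sqrt{2\log(2N/\pi)}\,\fmax$ is precisely the value that makes this exponent equal to $-\log(2N/\pi)$, so that each column fails with probability at most $2\exp(-\log(2N/\pi)) = \pi/N$. A union bound over the $N$ columns then yields $\mathbb{P}(\|\tp^T\xi\|_\infty > \lambda\Delta/2) \le \pi$, which is the claim. The argument is routine once Hoeffding is invoked; the only point requiring care is to verify that Assumption~\ref{ass:1} supplies exactly what Hoeffding needs---independence, zero mean, and almost-sure boundedness---with no appeal to a uniform (or any specific) error distribution. That weaker hypothesis is the respect in which this result improves on earlier analyses, so I would flag explicitly that boundedness alone (rather than an assumed distributional form) drives the sub-Gaussian tail.
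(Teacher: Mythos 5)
Your proof is correct and follows essentially the same route as the paper's: Hoeffding's inequality applied to each column inner product $\tp_i^T\xi$ (using independence, zero mean, and the $\Delta/2$ boundedness from Assumption~\ref{ass:1}), the bound $\|\tp_i\|\le\fmax$, and a union bound over the $N$ columns, with $\lambda=\sqrt{2\log(2N/\pi)}\,\fmax$ chosen to make each column's failure probability $\pi/N$. The only cosmetic difference is that you invoke the two-sided Hoeffding bound directly, whereas the paper bounds the one-sided tail and then doubles it.
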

\begin{proof}
Define the random variable $Z_j=\tp_{j}^T(\tp
x^*-\tilde{y})=\tp_{j}^T\xi$, where
$\xi=[\xi_1,...,\xi_{\tilde{M}}]$ is defined in an obvious way.
(Note that $\|Z\|_\infty= \|\tp^T(\tp x^*-\tilde{y})\|_\infty$.)
Since $\mathbb{E}(Z_j)=0$ (from Assumption~\ref{ass:1}) and all
$\tp_{ij}\xi_i$'s are in the range $[-\tp_{ij}\Delta/2,
  \tp_{ij}\Delta/2]$, we use the Hoeffding inequality to obtain
\begin{align*}
\mathbb{P}(Z_j > \lambda\Delta/2) =&\mathbb{P}(Z_j - \mathbb{E}(Z_j)
>
\lambda\Delta/2)\\
=&\mathbb{P}\left(\sum_{i=1}^{\tilde{M}}\tp_{ij}\xi_i - \mathbb{E}(Z_j) >
\lambda\Delta/2 \right)\\
\leq& \exp{-2(\lambda\Delta/2)^2\over \sum_{i=1}^{\tilde{M}}(\tp_{ij}\Delta)^2}\\
=& \exp{-\lambda^2\over 2\sum_{i}\tp_{ij}^2}\\
\leq& \exp{-\lambda^2\over 2\fmax^2},
\end{align*}
\sjwresolved{Are you absolutely sure that the Hoeffding inequality
has
  been invoked correctly here? The variance bound on $|Z_j|$ is
  correct, the factor $1/2$ is correct, etc? I am offline so cannot
  check. {\rca Yes, the factor $1/2$ is correct. I have double checked
    the Hoeffding inequality. Actually, the factor should $2$, but
    considering $\lambda/2$ the factor becomes $1/2$.}}  which implies
(using the union bound) that
\begin{align*}
\mathbb{P}(|Z_j| > \lambda\Delta/2) \leq 2\exp{-\lambda^2\over 2\fmax^2}
& \Rightarrow \mathbb{P}\left(\|Z\|_\infty = \max_j|Z_j| > \lambda\Delta/2 \right) \leq 2N\exp{-\lambda^2\over 2\fmax^2}\\
& \Rightarrow \mathbb{P}\left(\|Z\|_\infty > \sqrt{{1\over
2}\log{2N\over \pi}}\fmax\Delta\right) \leq \pi,
\end{align*}
\sjwresolved{Can we shorten the derivation above by just invoking the
  ``union bound''? Again I cannot check as I am offline. {\rca Yes, it
    can be shortened.}}  where the last line follows by setting
$\lambda$ to the prescribed value. This completes the proof.
\end{proof}
Similar claims with Gaussian (or sub-Guassian) noise assumption to Lemma~\eqref{lem:feasible} can be found
in \citet{Zhang09a,Liu12JMLR}.

\begin{lemma}
We have
\[
\|h_{T_{01}^c}\| \leq
\sum_{j=2}^J\|h_{T_j}\|\leq\|h_{T_{0}^c}\|_1/\sqrt{l},
\]
where $T_{01}=T_0\cup T_1$. \label{lem_T01}
\end{lemma}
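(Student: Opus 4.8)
The plan is to establish the two inequalities separately; both are standard consequences of the way the index sets $T_j$ were constructed. Throughout I would use the facts that $T_{01}^c = T_0^c \setminus T_1 = \bigcup_{j=2}^J T_j$ is a disjoint union, that $\{T_j\}_{j\ge 1}$ partitions $T_0^c$, and that each $T_j$ (for $j\ge 1$) contains at most $l$ entries, sorted so that every entry of $h_{T_j}$ is no larger in magnitude than every entry of $h_{T_{j-1}}$.

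\emph{First inequality.} Since the supports $T_2,\dots,T_J$ are pairwise disjoint, I would write $\|h_{T_{01}^c}\|^2 = \sum_{j=2}^J \|h_{T_j}\|^2$ and then invoke the elementary inequality $\sqrt{\sum_j a_j^2}\le \sum_j a_j$, valid for nonnegative reals $a_j=\|h_{T_j}\|$. This yields $\|h_{T_{01}^c}\| \le \sum_{j=2}^J \|h_{T_j}\|$ at once.

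\emph{Second inequality.} The key step is to bound $\|h_{T_j}\|$ by the $\ell_1$ norm of the preceding block $h_{T_{j-1}}$. For $j\ge 2$, the sorting ensures that every component of $h_{T_j}$ has magnitude at most the smallest magnitude among the components of $h_{T_{j-1}}$, which is in turn at most the average magnitude $\|h_{T_{j-1}}\|_1/l$ (using that $T_{j-1}$ has $l$ entries). Hence $\|h_{T_j}\|_\infty \le \|h_{T_{j-1}}\|_1/l$, and since $T_j$ has at most $l$ entries, $\|h_{T_j}\| \le \sqrt{l}\,\|h_{T_j}\|_\infty \le \|h_{T_{j-1}}\|_1/\sqrt{l}$. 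Summing over $j=2,\dots,J$ and reindexing, I would obtain $\sum_{j=2}^J \|h_{T_j}\| \le \frac{1}{\sqrt{l}}\sum_{j=1}^{J-1}\|h_{T_j}\|_1 \le \frac{1}{\sqrt{l}}\sum_{j=1}^J \|h_{T_j}\|_1 = \|h_{T_0^c}\|_1/\sqrt{l}$, the final equality being the partition property.

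The argument poses essentially no obstacle; the only point requiring care is the averaging bound $\min_{i\in T_{j-1}}|h_i| \le \|h_{T_{j-1}}\|_1/l$, which relies on each earlier block (other than possibly the last) having exactly $l$ entries. One should also check the edge case in which $T_J$ contains fewer than $l$ entries, but this only weakens the relevant bounds in the favorable direction and so causes no difficulty.
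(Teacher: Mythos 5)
Your proof is correct and follows essentially the same route as the paper's: the central step in both is that each component of $h_{T_j}$ is bounded by the average magnitude $\|h_{T_{j-1}}\|_1/l$ of the preceding block, giving $\|h_{T_j}\| \leq \|h_{T_{j-1}}\|_1/\sqrt{l}$, followed by summing and reindexing. Your treatment is slightly more explicit than the paper's (spelling out the disjoint-support argument for the first inequality and the edge case of the final block), but there is no substantive difference.
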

\begin{proof}
First, we have for any $j\geq 1$ that
\[
\|h_{T_{j+1}}\|^2 \leq l\|h_{T_{j+1}}\|_\infty^2 \leq
l(\|h_{T_j}\|_1/l)^2 = \|h_{T_j}\|_1^2/l,
\]
because the largest value in $|h_{T_{j+1}}|$ cannot exceed the
average value of the components of $|h_{T_j}|$.  It follows that
\[
\|h_{T_{01}^c}\| \leq \sum_{j=2}^J\|h_{T_j}\| \leq \sum_{j=1}^{J-1}
\| h_{T_j} \|_1 / \sqrt{l} \leq\|h_{T_{0}^c}\|_1/\sqrt{l}.
\]
\end{proof}
Similar claims or inequalities to Lemma~\ref{lem_T01} can be found
in \citet{Zhang09a,Candes07a, Liu10}.

\begin{lemma}
Assume that~\eqref{eqn_feasible_extend} holds. We have
\begin{subequations}
\begin{align}
\label{eq:lem7.1}
\|h_{T_0^c}\|_1 & \leq 3\|h_{T_0}\|_1 + 4\|x^*_{T_0^c}\|_1, \\
\label{eq:lem7.2} \|h\| & \leq \sqrt{1+9s/l} \|h_{T_{01}}\| +
4\|x^*_{T_{0}^c}\|_1/\sqrt{l}.
\end{align}
\end{subequations}
\label{lem_h}
\end{lemma}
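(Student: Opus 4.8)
The plan is to derive both inequalities from the optimality of $\hat{x}$ together with the feasibility hypothesis \eqref{eqn_feasible_extend}. Writing $g := \tp x^* - \tilde{y}$ for the quantization error and recalling $h = \hat{x} - x^*$, so that $\tp \hat{x} - \tilde{y} = g + \tp h$, I would first exploit that $\hat{x}$ minimizes the objective \eqref{eq_obj}. Comparing its value at $\hat{x}$ with that at the feasible point $x^*$ gives $\tfrac12\|g + \tp h\|^2 + \lambda\Delta\|\hat{x}\|_1 \le \tfrac12\|g\|^2 + \lambda\Delta\|x^*\|_1$. Expanding the square and discarding the nonnegative term $\tfrac12\|\tp h\|^2$ leaves $\langle \tp^T g, h\rangle \le \lambda\Delta(\|x^*\|_1 - \|\hat{x}\|_1)$. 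The hypothesis \eqref{eqn_feasible_extend} bounds $\|\tp^T g\|_\infty \le \lambda\Delta/2$, so by H\"older's inequality $\langle \tp^T g, h\rangle \ge -\tfrac{\lambda\Delta}{2}\|h\|_1$; dividing through by $\lambda\Delta$ yields the master inequality $\|\hat{x}\|_1 - \|x^*\|_1 \le \tfrac12\|h\|_1$.

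For \eqref{eq:lem7.1} I would convert this master inequality into a cone condition by splitting the $\ell_1$ norms over $T_0$ and $T_0^c$. Using $\hat{x} = x^* + h$ and the reverse triangle inequality on each block gives $\|\hat{x}\|_1 \ge \|x^*_{T_0}\|_1 - \|h_{T_0}\|_1 + \|h_{T_0^c}\|_1 - \|x^*_{T_0^c}\|_1$, while $\|x^*\|_1 = \|x^*_{T_0}\|_1 + \|x^*_{T_0^c}\|_1$. Substituting these into the master inequality, writing $\|h\|_1 = \|h_{T_0}\|_1 + \|h_{T_0^c}\|_1$, and collecting terms gives $\tfrac12\|h_{T_0^c}\|_1 \le \tfrac32\|h_{T_0}\|_1 + 2\|x^*_{T_0^c}\|_1$, which is exactly \eqref{eq:lem7.1} after multiplying by $2$.

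For \eqref{eq:lem7.2}, rather than applying the triangle inequality directly to $\|h\| \le \|h_{T_{01}}\| + \|h_{T_{01}^c}\|$ (which would only give the looser constant $1 + 3\sqrt{s/l}$), I would use the Pythagorean identity $\|h\|^2 = \|h_{T_{01}}\|^2 + \|h_{T_{01}^c}\|^2$. Lemma~\ref{lem_T01} bounds $\|h_{T_{01}^c}\| \le \|h_{T_0^c}\|_1/\sqrt{l}$; combining with \eqref{eq:lem7.1} and the Cauchy--Schwarz estimate $\|h_{T_0}\|_1 \le \sqrt{s}\,\|h_{T_0}\| \le \sqrt{s}\,\|h_{T_{01}}\|$ gives $\|h_{T_{01}^c}\| \le 3\sqrt{s/l}\,\|h_{T_{01}}\| + 4\|x^*_{T_0^c}\|_1/\sqrt{l}$. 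Writing $u := \|h_{T_{01}}\|$, $b := 4\|x^*_{T_0^c}\|_1/\sqrt{l}$ and $c := 3\sqrt{s/l}$, I would then estimate $\|h\|^2 \le u^2 + (cu + b)^2 = (1+c^2)u^2 + 2cub + b^2$ and use $c \le \sqrt{1+c^2}$ to majorize this by $(\sqrt{1+c^2}\,u + b)^2$; taking square roots recovers \eqref{eq:lem7.2}, since $1+c^2 = 1 + 9s/l$.

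The routine parts are the block-wise reverse triangle inequalities and the Cauchy--Schwarz step. The one place that needs care --- and the main obstacle to matching the stated constant --- is the last step: a naive triangle inequality is too lossy, and the sharper factor $\sqrt{1+9s/l}$ requires the Pythagorean splitting of $\|h\|^2$ followed by the ``complete the square'' bound $2cub \le 2\sqrt{1+c^2}\,ub$. I would also double-check the factor arithmetic in the master inequality, since the precise $\tfrac12$ coefficient (inherited from the $\lambda\Delta/2$ in \eqref{eqn_feasible_extend} and the $\lambda\Delta$ weighting in the objective \eqref{eq_obj}) is exactly what produces the constants $3$ and $4$ in \eqref{eq:lem7.1}.
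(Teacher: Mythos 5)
Your proposal is correct and follows essentially the same route as the paper's proof: the master inequality you derive by expanding the square and discarding $\tfrac12\|\tp h\|^2$ is exactly the paper's convexity step (identical for a quadratic), the block-wise splitting yields the same cone condition \eqref{eq:lem7.1}, and your Pythagorean splitting with the $c \le \sqrt{1+c^2}$ majorization is precisely the paper's completing-the-square argument for \eqref{eq:lem7.2}. No gaps; the constants work out exactly as you computed.
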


\begin{proof}
Since $\hat{x}$ is the solution of~\eqref{eqn_ourformulation}, we
have
\begin{align*}
0&\geq {1\over 2}\|\tp \hat{x} - \tilde{y}\|^2 - {1\over 2}\|\tp x^*
-\tilde{y}\|^2 + \lambda\Delta(\|\hat{x}\|_1 - \|x^*\|_1)\\
&\geq h^T\tp^T(\tp x^* -\tilde{y}) +
\lambda\Delta(\|\hat{x}\|_1 - \|x^*\|_1) \quad\quad (\text{by convexity of $(1/2) \|\tp x - \tilde{y}\|^2$}) \\
&= h^T\tp^T(\tp x^* -\tilde{y}) +
\lambda\Delta(\|\hat{x}_{T_0}\|_1-\|x^*_{T_0}\|_1 +
\|\hat{x}_{T_0^c}\|_1 - \|x^*_{T_0^c}\|_1)\\
&\geq -\|h\|_1\|\tp^T(\tp x^* -\tilde{y})\|_\infty +
\lambda\Delta(\|\hat{x}_{T_0}\|_1-\|x^*_{T_0}\|_1 +
\|\hat{x}_{T_0^c}\|_1 - \|x^*_{T_0^c}\|_1)\\
&\geq -\|h\|_1\lambda\Delta/2 +
\lambda\Delta(\|\hat{x}_{T_0}\|_1-\|x^*_{T_0}\|_1 +
\|\hat{x}_{T_0^c}\|_1+\|x^*_{T_0^c}\|_1 - 2\|x^*_{T_0^c}\|_1)~~~~(\text{from~\eqref{eqn_feasible_extend}})\\
&\geq -(\|h_{T_0}\|_1+\|h_{T_0^c}\|_1)\lambda\Delta/2 +
\lambda\Delta(-\|h_{T_0}\|_1 +
\|h_{T_0^c}\|_1 - 2\|x^*_{T_0^c}\|_1)\\
&= {1\over 2}\lambda\Delta\|h_{T_0^c}\|_1 - {3\over
2}\lambda\Delta\|h_{T_0}\|_1 - 2\lambda\Delta\|x^*_{T_0^c}\|_1.
\end{align*}
It follows that $3\|h_{T_0}\|_1 + 4\|x^*_{T_0^c}\|_1 \geq
\|h_{T_0^c}\|_1$, proving \eqref{eq:lem7.1}.

The second inequality \eqref{eq:lem7.2} is from
\begin{align*}
\|h\|^2 &= \|h_{T_{01}}\|^2 + \|h_{T_{01}^c}\|^2\\
&\leq \|h_{T_{01}}\|^2 + \|h_{T_{0}^c}\|^2_1/l~~~~(\text{from Lemma~\ref{lem_T01}})\\
&\leq \|h_{T_{01}}\|^2 + (3\|h_{T_0}\|_1 + 4\|x^*_{T_0^c}\|_1)^2/l~~~~(\text{from \eqref{eq:lem7.1}})\\
&\leq \|h_{T_{01}}\|^2 + (3\sqrt{s}\|h_{T_{01}}\| + 4\|x^*_{T_0^c}\|_1)^2/l\\
&= (1+9s/l)\|h_{T_{01}}\|^2 +
24\sqrt{s}/l\|h_{T_{01}}\|\|x^*_{T_0^c}\|_1
+ {16\|x^*_{T_0^c}\|_1^2/l} \\
&\leq \left[\sqrt{1+9s/l}\|h_{T_{01}}\| +
4\|x^*_{T_0^c}\|_1/\sqrt{l}\right]^2.
\end{align*}
\end{proof}

\begin{lemma}
For any matrix $\Psi$ with
$N$ columns, and $s,l\leq N$, we have
\[
\|\Psi h\|^2 \geq A_0(\Psi)\|h_{T_{01}}\|^2 -
A_1(\Psi)\|h_{T_{01}}\|\|x^*_{T_{0}^c}\|_1/\sqrt{l},
\]
where $A_0(\Psi)$ and $A(\Psi)$ are defined in \eqref{eq:def.A0} and
\eqref{eq:def.A1} respectively. \label{lem_left.LASSO}
\end{lemma}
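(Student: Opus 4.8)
The plan is to lower-bound $\|\Psi h\|^2$ by isolating the energy on the dominant block $T_{01}$ and controlling its interaction with the tail $h_{T_{01}^c}=\sum_{j\ge 2}h_{T_j}$. First I would write $h=h_{T_{01}}+\sum_{j\ge 2}h_{T_j}$, expand the square, and discard the nonnegative term $\|\Psi h_{T_{01}^c}\|^2$ to obtain
\[
\|\Psi h\|^2 \ge \|\Psi h_{T_{01}}\|^2 - 2\sum_{j\ge 2}|\langle \Psi h_{T_{01}}, \Psi h_{T_j}\rangle|.
\]
Since $|T_{01}|\le s+l$, the definition \eqref{eq:defrho-} gives $\|\Psi h_{T_{01}}\|^2 \ge \rho^-(s+l,\Psi)\|h_{T_{01}}\|^2$, which supplies the leading $\rho^-(s+l,\Psi)$ term in $A_0(\Psi)$.

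The key technical ingredient is a restricted-orthogonality bound for vectors with disjoint supports. For $u,v$ with disjoint supports whose union has cardinality at most $k$, the parallelogram identity $4\langle\Psi u,\Psi v\rangle=\|\Psi(u+v)\|^2-\|\Psi(u-v)\|^2$, combined with $\|u\pm v\|^2=\|u\|^2+\|v\|^2$ and the definitions \eqref{eq:defrho-}, \eqref{eq:defrho+}, yields (after rescaling to unit norm) $|\langle\Psi u,\Psi v\rangle|\le \tfrac12[\rho^+(k,\Psi)-\rho^-(k,\Psi)]\|u\|\|v\|$. I would apply this with $u=h_{T_{01}}$ and $v=h_{T_j}$ for each $j\ge 2$; since these supports are disjoint and $|T_{01}\cup T_j|\le s+2l$, the relevant quantity is $D:=\rho^+(s+2l,\Psi)-\rho^-(s+2l,\Psi)$. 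Summing over $j\ge 2$ and noting that the outer factor $2$ cancels the factor $\tfrac12$ gives
\[
2\sum_{j\ge 2}|\langle \Psi h_{T_{01}}, \Psi h_{T_j}\rangle| \le D\,\|h_{T_{01}}\|\sum_{j\ge 2}\|h_{T_j}\|.
\]

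To finish, I would bound the tail sum. Lemma~\ref{lem_T01} gives $\sum_{j\ge 2}\|h_{T_j}\|\le \|h_{T_0^c}\|_1/\sqrt{l}$, and \eqref{eq:lem7.1} together with $\|h_{T_0}\|_1\le\sqrt{s}\,\|h_{T_0}\|\le\sqrt{s}\,\|h_{T_{01}}\|$ gives $\|h_{T_0^c}\|_1\le 3\sqrt{s}\,\|h_{T_{01}}\|+4\|x^*_{T_0^c}\|_1$, so the tail is at most $3\sqrt{s/l}\,\|h_{T_{01}}\|+4\|x^*_{T_0^c}\|_1/\sqrt{l}$. Substituting into the cross-term bound produces $-3\sqrt{s/l}\,D\,\|h_{T_{01}}\|^2-4D\,\|h_{T_{01}}\|\,\|x^*_{T_0^c}\|_1/\sqrt{l}$; collecting the coefficient of $\|h_{T_{01}}\|^2$ as $\rho^-(s+l,\Psi)-3\sqrt{s/l}\,D=A_0(\Psi)$ and identifying $4D=A_1(\Psi)$ recovers the claim. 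I expect the main obstacle to be the restricted-orthogonality estimate: one must track the support cardinalities so that the combined support has size at most $s+2l$ (not $s+l$), ensuring the correct constant $D$ appears, and the factor $\tfrac12$ in the parallelogram bound is exactly what pairs with the $3\sqrt{s}$ from \eqref{eq:lem7.1} to yield the coefficient $3$ (rather than $6$) inside $A_0(\Psi)$.
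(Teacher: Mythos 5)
Your proposal is correct and follows essentially the same route as the paper's own proof: the same expansion of $\|\Psi h\|^2$ dropping the nonnegative tail term, the same parallelogram/restricted-orthogonality bound with support size $s+2l$ (the paper's inequality \eqref{eqn_lem4}), and the same tail estimates via Lemma~\ref{lem_T01}, inequality \eqref{eq:lem7.1}, and $\|h_{T_0}\|_1\leq\sqrt{s}\,\|h_{T_{01}}\|$. The bookkeeping of constants ($2\cdot\tfrac12\cdot 3\sqrt{s/l}$ giving the coefficient $3$ in $A_0(\Psi)$, and $4D=A_1(\Psi)$) matches the paper exactly.
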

\begin{proof}
For any $j\geq 2$, we have
\begin{align}
\nonumber
& \;\; {|h^T_{T_{01}}\Psi_{T_{01}}^T\Psi_{T_j} h_{T_j}| \over  \|h_{T_{01}}\|\|h_{T_j}\|} \\
\nonumber
&= \frac14 \left| \left\|\Psi_{T_{01}} h_{T_{01}}/\|h_{T_{01}}\|+\Psi_{T_j} h_{T_j}/\|h_{T_j}\|\|^2-\|\Psi_{T_{01}} h_{T_{01}}/\|h_{T_{01}}\|-\Psi_{T_j} h_{T_j}/\|h_{T_j}\| \right\|^2 \right|\\
&= \frac14 \left| \left\| \left[ \Psi_{T_{01}} \, : \, \Psi_{T_j}
\right] \left[ \begin{matrix} h_{T_{01}} / \|h_{T_{01}} \| \\
h_{T_j} / \|h_{T_j} \| \end{matrix} \right] \right\|^2-
\nonumber\left\| \left[ \Psi_{T_{01}} \, : \, \Psi_{T_j} \right]
\left[ \begin{matrix} h_{T_{01}} / \|h_{T_{01}} \| \\
-h_{T_j} / \|h_{T_j} \| \end{matrix} \right] \right\|^2 \right| \\
\nonumber
&\leq \frac14 \left( {2}{\rho}^+({s+2l})-{2}{\rho}^-(s+2l) \right)\\
\label{eqn_lem4} &=\frac12 \left( {\rho}^+(s+2l)-{\rho}^-(s+2l) \right).
\end{align}
The inequality above follows from the definitions \eqref{eq:defrho-}
and \eqref{eq:defrho+}, and the fact that fact that $h_{T_{01}}/\|
h_{T_{01}} \|$ and $h_{T_j} / \| h_{T_j} \|$ are $\ell_2$-unit
vectors, so that
\[
\left\| \left[ \begin{matrix} h_{T_{01}} / \|h_{T_{01}} \| \\
h_{T_j} / \|h_{T_j} \| \end{matrix} \right] \right\|^2 =
\left\| \left[ \begin{matrix} h_{T_{01}} / \|h_{T_{01}} \| \\
-h_{T_j} / \|h_{T_j} \| \end{matrix} \right] \right\|^2 = 2.
\]
Considering the left side of the claimed inequality, we have
\begin{align*}
  & \;\; \|\Psi h\|^2 \\
&= \|\Psi_{T_{01}}h_{T_{01}} \|^2 +
  2h^T_{T_{01}}\Psi^T_{T_{01}}\Psi_{T_{01}^c}h_{T_{01}^c} +
  \|\Psi_{T_{01}^c}h_{T_{01}^c}\|^2\\
  &\geq \|\Psi_{T_{01}}h_{T_{01}} \|^2 - 2\sum_{j\geq
  2}|h^T_{T_{01}}\Psi^T_{T_{01}}\Psi_{T_{j}}h_{T_{j}}|\\
&\geq {\rho}^-(s+l)\|h_{T_{01}}\|^2 - ({\rho}^+{(s+2l)}-{\rho}^-(s+2l))\|h_{T_{01}}\|\sum_{j\geq2}\|h_{T_j}\|~~~~(\text{from \eqref{eqn_lem4}})\\
&\geq {\rho}^-(s+l)\|h_{T_{01}}\|^2 -({\rho}^+(s+2l)-{\rho}^-(s+2l))\|h_{T_{01}}\|\|h_{T_{0}^c}\|_1/\sqrt{l}~~~~(\text{from Lemma~\ref{lem_T01}})\\
&\geq {\rho}^-(s+l)\|h_{T_{01}}\|^2 - ({\rho}^+(s+2l)-{\rho}^-(s+2l))\|h_{T_{01}}\|(3\|h_{T_0}\|_1/\sqrt{l}+4\|x^*_{T_0^c}\|_1/\sqrt{l}) \;\; (\text{from \eqref{eq:lem7.1}})\\
&\geq
\left({\rho}^-(s+l)-3\sqrt{s/l}({\rho}^+(s+2l)-{\rho}^-(s+2l))\right)\|h_{T_{01}}\|^2
-
\\
&\quad 4({\rho}^+(s+2l)-{\rho}^-(s+2l))\|x^*_{T_0^c}\|_1\|h_{T_{01}}\|/\sqrt{l}~~~~(\text{using $\|h_{T_0}\|_1\leq \sqrt{s}\|h_{T_0}\|\leq \sqrt{s}\|h_{T_{01}}\|$})\\
&\geq A_0(\Psi) \|h_{T_{01}}\|^2- A_1(\Psi)
\|h_{T_{01}}\|\|x^*_{T_0^c}\|_1 /\sqrt{l},
\end{align*}
which completes the proof.
\end{proof}
Similar claims or inequalities to~\eqref{eqn_lem4} can be found
in \citet{Candes07a, Candes08, Zhang09a}.

\begin{lemma} \label{lem_right.LASSO}
Assume that~\eqref{eqn_feasible_extend} holds. We have
\begin{subequations}
\begin{align}
  \|\tp h\|^2 \leq& {3\over 2}\lambda\Delta\|h\|_1 \leq
  6\sqrt{s}\lambda\Delta\|h_{T_{01}}\| + 6\lambda\Delta\|x^*_{T_0^c}\|_1,\label{eqn_lem_right:LASSO.1}\\
  \|\tp h\|^2 \leq& \tilde{M}\Delta^2.\label{eqn_lem_right:LASSO.2}
\end{align}
\end{subequations}
\end{lemma}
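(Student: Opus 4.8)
I would prove the two displayed inequalities separately, disposing of \eqref{eqn_lem_right:LASSO.2} first because it is immediate and reserving the real work for \eqref{eqn_lem_right:LASSO.1}. For \eqref{eqn_lem_right:LASSO.2}, the plan is simply to combine the componentwise bound of Lemma~\ref{lem_diff} with the fact that $\tp h$ has exactly $\tilde{M}$ entries: since $\|\tp h\|_\infty \le \Delta$, we get $\|\tp h\|^2 = \sum_{i=1}^{\tilde{M}}(\tp h)_i^2 \le \tilde{M}\|\tp h\|_\infty^2 \le \tilde{M}\Delta^2$, and no other ingredient is required.

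For the first inequality in \eqref{eqn_lem_right:LASSO.1}, namely $\|\tp h\|^2 \le \tfrac{3}{2}\lambda\Delta\|h\|_1$, I would start from the optimality of $\hat{x}$ for \eqref{eqn_ourformulation}. The crucial modeling choice is to use the first-order (variational) optimality condition rather than a plain objective-value comparison: because $x^*$ is feasible, there exists a subgradient $g\in\partial\|\hat{x}\|_1$, together with a normal-cone contribution of the constraints that only helps, such that $\langle \tp^T(\tp\hat{x}-\tilde{y}) + \lambda\Delta g,\, x^*-\hat{x}\rangle \ge 0$. Writing $\tp\hat{x}-\tilde{y} = (\tp x^*-\tilde{y}) + \tp h$ and $x^*-\hat{x} = -h$, this rearranges to $\|\tp h\|^2 \le -h^T\tp^T(\tp x^*-\tilde{y}) - \lambda\Delta\langle g, h\rangle$. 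I would then bound both right-hand terms by H\"older's inequality: the first by $\|h\|_1\|\tp^T(\tp x^*-\tilde{y})\|_\infty \le \tfrac{\lambda\Delta}{2}\|h\|_1$ via the feasibility hypothesis \eqref{eqn_feasible_extend}, and the second by $\lambda\Delta\|g\|_\infty\|h\|_1 \le \lambda\Delta\|h\|_1$ since $\|g\|_\infty\le 1$. Adding yields the claimed coefficient $\tfrac{3}{2}$.

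For the second inequality in \eqref{eqn_lem_right:LASSO.1}, I would convert the global $\ell_1$ norm into the quantities on the right by splitting $\|h\|_1 = \|h_{T_0}\|_1 + \|h_{T_0^c}\|_1$, applying the cone bound \eqref{eq:lem7.1} to obtain $\|h\|_1 \le 4\|h_{T_0}\|_1 + 4\|x^*_{T_0^c}\|_1$, and finally using the Cauchy--Schwarz estimate $\|h_{T_0}\|_1 \le \sqrt{s}\|h_{T_0}\| \le \sqrt{s}\|h_{T_{01}}\|$ on the $s$-element set $T_0$. Multiplying through by $\tfrac{3}{2}\lambda\Delta$ then produces $6\sqrt{s}\lambda\Delta\|h_{T_{01}}\| + 6\lambda\Delta\|x^*_{T_0^c}\|_1$.

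The step I expect to be the main obstacle is securing the correct coefficient on $\|\tp h\|^2$. A naive objective-value comparison $F(\hat{x})\le F(x^*)$ retains only $\tfrac12\|\tp h\|^2$ on the left and would cost a factor of two; keeping the full $\|\tp h\|^2$ forces one to work with the variational-inequality form of optimality and to observe that the normal cone of the $\ell_\infty$ and saturation constraints enters with the favorable sign. Once that term is handled correctly, the remainder is a routine chaining of H\"older's inequality, the feasibility hypothesis \eqref{eqn_feasible_extend}, and the previously established cone condition \eqref{eq:lem7.1}.
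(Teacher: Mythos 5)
Your proposal is correct and follows essentially the same route as the paper's proof: the paper also invokes the first-order optimality condition with a subgradient $g\in\partial\|\hat{x}\|_1$ and a normal-cone element $n\in N_F(\hat{x})$ (your variational inequality is exactly this condition after using $\langle n, x^*-\hat{x}\rangle\le 0$), then chains H\"older's inequality, the hypothesis \eqref{eqn_feasible_extend}, $\|g\|_\infty\le 1$, the cone bound \eqref{eq:lem7.1}, and $\|h_{T_0}\|_1\le\sqrt{s}\|h_{T_0}\|$, while \eqref{eqn_lem_right:LASSO.2} likewise follows from Lemma~\ref{lem_diff}. Your observation about the factor of two lost by a plain objective-value comparison is also accurate and explains why the paper works with the KKT form here.
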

\begin{proof}
Denote the feasible region of~\eqref{eqn_ourformulation} as
\[
F:=\left\{x~|~\bar{\Phi}x-\bar{y}\geq 0,~\|\tp x-\tilde{y}\|_\infty\leq \Delta/2\right\}.
\]
Since $\hat{x}$ is the optimal solution
to~\eqref{eqn_ourformulation}, we have the optimality condition:
\begin{align*}
\tp^T(\tp\hat{x}-\tilde{y}) + \lambda\Delta \partial\|\hat{x}\|_1
\cap -N_F(\hat{x}) \neq \emptyset,
\end{align*}
where $N_F(\hat{x})$ denotes the normal cone of $F$ at the point
$\hat{x}$ and $\partial\|\hat{x}\|_1$ is the subgradient of the
function $\|.\|_1$ at the point $\hat{x}$. This condition is
equivalent to existence of $g\in \partial \|\hat{x}\|_1$ and $n\in
N_F(\hat{x})$ such that
$$\tp^T(\tp\hat{x}-\tilde{y})+\lambda\Delta g + n=0.$$
It follows that
\begin{align*}
& \quad \tp^T\tp h+ \tp^T(\tp{x^*}-\tilde{y})+\lambda\Delta g + n=0\\
&\Rightarrow h^T\tp^T\tp h+ h^T \tp^T(\tp x^* - \tilde{y}) +
\lambda\Delta h^Tg + h^Tn=0\\
&\Rightarrow \|\tp h\|^2 = -h^T \tp^T(\tp x^* -
\tilde{y}) - \lambda\Delta h^Tg - h^Tn \\
&\Rightarrow \|\tp h\|^2 \leq - h^T \tp^T(\tp x^* -
\tilde{y}) - \lambda\Delta h^Tg ~~~~(\text{using $x^*\in F$ and so $-h^Tn = (x^*-\hat{x})^Tn \leq 0$})\\
&\Rightarrow \|\tp h\|^2 \leq \|h\|_1\|\tp^T(\tp x^*
-\tilde{y})\|_\infty + \lambda\Delta\|h\|_1\|g\|_\infty.
\end{align*}
From $\|g\|_\infty\leq 1$ and~\eqref{eqn_feasible_extend}, we obtain
\begin{align*}
\|\tp h\|^2 &\leq \lambda\Delta\|h\|_1 / 2 + \lambda\Delta\|h\|_1\\
&= {3\over 2}\lambda\Delta \|h\|_1\\
&={3\over 2}\lambda\Delta (\|h_{T_0}\|_1 + \|h_{T_0^c}\|_1) \\
&\leq{3\over 2}\lambda\Delta (4\|h_{T_0}\|_1 + 4\|x^*_{T_0^c}\|_1)~~~~(\text{from~\eqref{eq:lem7.1}}) \\
&\leq 6\sqrt{s}\lambda\Delta \|h_{T_0}\| +
6\lambda\Delta\|x^*_{T_0^c}\|_1,
\end{align*}
which proves the first inequality.

From~\eqref{eq:lem1.1}, the second inequality is obtained by $ \|\tp
h\|^2 \leq \left( \sqrt{\tilde{M}\|\tp h\|_\infty}\right)^2\leq
\tilde{M}\Delta^2$.
\end{proof}

\subsection*{Proof of Theorem~\ref{thm_main_extend}}
\begin{proof}
First, assume that~\eqref{eqn_feasible} holds. Take $\Psi = \tp$ in
Lemma~\ref{lem_left.LASSO} and apply \eqref{eqn_lem_right:LASSO.1}.
We have
\[
A_0(\tp)\|h_{T_{01}}\|^2 -
({A_1({\tp})/\sqrt{l}})\|x^*_{T_{01}^c}\|_1\|h_{T_{01}}\| 
\leq  \|\tp h\|^2
\leq 6\sqrt{s}\lambda\Delta\|h_{T_{01}}\| +
6\lambda\Delta \|x^*_{T_0^c}\|_1.
\]
If follows that
\begin{align}
A_0(\tp)\|h_{T_{01}}\|^2 -
\left(({A_1(\tp)/\sqrt{l}})\|x^*_{T_{01}^c}\|_1+6\sqrt{s}\lambda\Delta \right)\|h_{T_{01}}\| \leq
6\lambda\Delta\|x^*_{T_0^c}\|_1. \label{eq:thm4.1}
\end{align}
Using $A_0(\tp) >0$ (which is assumed in the statement of the
theorem), we recall that for a quadratic inequality $ax^2 - bx \leq
c$ with $a,b,c>0$, one has
\begin{equation}
 x\leq {b+\sqrt{b^2+4ac} \over 2a} \leq
{2b+\sqrt{4ac}\over 2a} = {b\over a} + \sqrt{c\over a}.
\label{eqn_lem:LASSO.quatratic}
\end{equation}
Hence~\eqref{eq:thm4.1} implies that
\begin{align*}
\|h_{T_{01}}\| &\leq {1\over A_0(\tp)}\left(
{(A_1(\tp)/\sqrt{l})\|x^*_{T_{01}^c}\|_1} + 6\sqrt{s}\lambda\Delta\right) +
\sqrt{\frac{\lambda\Delta\|x^*_{T_0^c}\|_1}{A_0(\tp)}} \\
& = {6\sqrt{s}\lambda\Delta\over A_0(\tp)} + {A_1(\tp)\over
A_0(\tp)\sqrt{l}}\|x^*_{T_0^c}\|_1 + \sqrt{6\lambda\Delta\over A_0(\tp)}\|x^*_{T_0^c}\|_1^{1/2}.
\end{align*}
By invoking~\eqref{eq:lem7.2}, we prove \eqref{eqn_thm:LASSO.1_extend} by
\begin{align*}
\|h\|&\leq \sqrt{1+9s/l}\|h_{T_{01}}\| + \left(4/\sqrt{l} \right)\|x^*_{T_0^c}\|_1\\
&\leq {6 \sqrt{1+9s/l}\sqrt{s}\lambda\Delta\over A_0(\tp)} + \left(4+\frac{\sqrt{1+9s/l}A_1(\tp)}{A_0(\tp)}\right) \left(\|x^*_{T_0^c}\|_1/\sqrt{l} \right) +\sqrt{(1+9s/l)6\lambda\Delta\over A_0(\tp)}\|x^*_{T_0^c}\|_1^{1/2}\\
&= 6C_2(\tp)^2\sqrt{s}\lambda\Delta + C_1(\tp) \left( \|x^*_{T_0^c}\|_1/\sqrt{l} \right)+
2.5C_2(\tp)\sqrt{\lambda\Delta \|x^*_{T_0^c}\|_1}.
\end{align*}

Next we prove \eqref{eqn_thm:LASSO.2_extend}. Taking $\Psi = \tp$ in
Lemma~\ref{lem_left.LASSO} and applying
\eqref{eqn_lem_right:LASSO.2}, we have
\begin{align*}
A_0(\tp)\|h_{T_{01}}\|^2 -
{\left(A_1({\tp})/\sqrt{l} \right)}\|x^*_{T_{01}^c}\|_1\|h_{T_{01}}\| \leq \|\tp
h\|^2 \leq \tilde{M}\Delta^2.
\end{align*}
Using~\eqref{eqn_lem:LASSO.quatratic} again, one has
\[
 \|h_{T_{01}}\| \leq \frac{A_1(\tp)} {A_0(\tp)} \left( \|x^*_{T_0^c}\|_1/\sqrt{l} \right) + \frac{\sqrt{\tilde{M}}\Delta}{\sqrt{A_0(\tp)}}.
\]
By invoking~\eqref{eq:lem7.2}, we have
\begin{align*}
\|h\|\leq & \sqrt{1+9s/l}\|h_{T_{01}}\| +
\left(4/\sqrt{l} \right)\|x^*_{T_0^c}\|_1 \\
\leq &
\left(4+\frac{\sqrt{1+9s/l}A_1(\tp)}{A_0(\tp)}\right)\|x^*_{T_0^c}\|_1/\sqrt{l}
+ \sqrt{1+9s/l\over A_0(\tp)}\sqrt{\tilde{M}}\Delta,
\end{align*}
proving \eqref{eqn_thm:LASSO.2_extend}.

Note that all claims hold under the assumption
that~\eqref{eqn_feasible} is satisfied. Since
Lemma~\ref{lem:feasible} shows that~\eqref{eqn_feasible} holds with
probability at least $1-\pi$ with taking $\lambda =
\sqrt{2\log(2N/\pi)}\fmax$, we conclude that all claims hold with
the same probability.
\end{proof}

\subsection*{High-Probability Estimates of the Estimation Error}

For use in these results, we define the quantity
\begin{equation} \label{eq:chisat}
\chisat:={\bar{M}/M} = (M-\tilde{M})/M,
\end{equation}
which is the fraction of saturated measurements.
\begin{theorem}
  Assume $\Phi\in \mathbb{R}^{M\times N}$ to be a Gaussian random
  matrix, that is, each entry is i.i.d. and drawn from a standard
  Gaussian distribution $\mathcal{N}(0,1)$. Let
  $\tp\in\mathbb{R}^{\tilde{M}\times N}$ be the submatrix of $\Phi$
  taking $\tilde{M}$ rows from $\Phi$, with the remaining $\bar{M}$
  rows being used to form the other submatrix
  $\bar\Phi\in\mathbb{R}^{\bar{M}\times N}$, as defined in
  \eqref{eq:def.phis}.
  Then by choosing a threshold $\tau$ sufficiently small, and
  assuming that $\chisat$ satisfies the bound $\chisat(1-\log\chisat)
  \le \tau$, we have for any $k \ge 1$ such that $k \log N = o(M)$
  that, with probability larger than $1-O\left(\exp(-\Omega(M))\right)$, the following
  estimates hold:
\begin{subequations}
\label{eqn_thm3}
\begin{align}
\sqrt{{\rho}^+(k)}\leq& {17\over 16}\sqrt{M} + o\left(\sqrt{M}\right),\label{eqn_thm3.1}\\
\sqrt{{\rho}^-(k)}\geq& {15\over 16}\sqrt{M} - o\left(\sqrt{M}\right), \label{eqn_thm3.2}\\
\sqrt{\tilde{\rho}^+(k)}\leq & {17\over 16}\sqrt{\tilde{M}} + o\left(\sqrt{M}\right), \label{eqn_thm3.5}\\
\sqrt{\tilde{\rho}^-(k)}\geq & {15\over 16}\sqrt{\tilde{M}} -
o\left(\sqrt{M}\right). \label{eqn_thm3.6}
\end{align}
\end{subequations}
\label{thm_rhobound}
\end{theorem}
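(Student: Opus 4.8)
The plan is to read the quantities $\rho^{\pm}(k,\Psi)$ as the extreme \emph{squared} singular values over all column submatrices of $\Psi$ having at most $k$ columns, so that the four claims together form a restricted-isometry statement for Gaussian blocks. The two ingredients are (a) sharp concentration of the largest and smallest singular values of a single fixed Gaussian block, and (b) union bounds over the relevant families of submatrices. For ingredient (a), recall that if $A$ is an $m\times k$ matrix with i.i.d.\ $\mathcal{N}(0,1)$ entries, then the Gordon / Davidson--Szarek bounds give $\mathbb{E}\,\sigma_{\max}(A)\le \sqrt{m}+\sqrt{k}$ and $\mathbb{E}\,\sigma_{\min}(A)\ge \sqrt{m}-\sqrt{k}$, while $\sigma_{\max}$ and $\sigma_{\min}$ are $1$-Lipschitz functions of the entries in Frobenius norm; Gaussian concentration of measure then yields, for every $t>0$,
\[
\mathbb{P}\bigl(\sigma_{\max}(A)\ge \sqrt{m}+\sqrt{k}+t\bigr)\le e^{-t^2/2}, \qquad \mathbb{P}\bigl(\sigma_{\min}(A)\le \sqrt{m}-\sqrt{k}-t\bigr)\le e^{-t^2/2}.
\]

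First I would establish \eqref{eqn_thm3.1}--\eqref{eqn_thm3.2} for the full matrix $\Phi$. Fix a column set $T$ with $|T|\le k$; then $\Phi_T$ is an $M\times k$ Gaussian block, and applying the two tail bounds with $m=M$ and $t=\sqrt{M}/16$ controls $\sigma_{\max}(\Phi_T)$ and $\sigma_{\min}(\Phi_T)$ with failure probability at most $e^{-M/512}$ each. A union bound over the $e^{O(k\log N)}$ column sets $T$ leaves a total failure probability of order $e^{O(k\log N)-M/512}$. Since $k\log N=o(M)$ this exponent is $-\Omega(M)$, and since also $\sqrt{k}=o(\sqrt{M})$ the surviving bounds read $\sqrt{\rho^+(k)}\le \sqrt{M}+\tfrac{1}{16}\sqrt{M}+o(\sqrt{M})=\tfrac{17}{16}\sqrt{M}+o(\sqrt{M})$ and $\sqrt{\rho^-(k)}\ge \tfrac{15}{16}\sqrt{M}-o(\sqrt{M})$, which is exactly \eqref{eqn_thm3.1}--\eqref{eqn_thm3.2}. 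Note that the choice $t=\sqrt{M}/16$ is precisely what produces the constants $17/16$ and $15/16$.

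The estimates \eqref{eqn_thm3.5}--\eqref{eqn_thm3.6} for $\tp$ are the delicate part, because the rows forming $\tp$ are the \emph{unsaturated} measurements and therefore constitute a random, $\Phi$-dependent subset; one cannot simply treat $\tp$ as a fixed $\tilde{M}\times N$ Gaussian matrix. I would remove this dependence by bounding uniformly over \emph{all} ways of deleting $\bar{M}$ rows: for every row set $R$ with $|R|=\tilde{M}$ and every column set $T$ with $|T|\le k$, the block $\Phi_{R,T}$ is $\tilde{M}\times k$ Gaussian, so the same concentration applies with $m=\tilde{M}$ and $t=\sqrt{\tilde{M}}/16$. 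The union bound now carries the extra factor $\binom{M}{\bar{M}}\le (eM/\bar{M})^{\bar{M}}=\exp\bigl(M\chisat(1-\log\chisat)\bigr)$ counting the row deletions, giving a total failure probability of order
\[
\exp\bigl(M\chisat(1-\log\chisat)+O(k\log N)-\tilde{M}/512\bigr).
\]
Because $\chisat$ is small, $\tilde{M}=(1-\chisat)M=\Omega(M)$, so $\tilde{M}/512=\Omega(M)$, while the hypothesis $\chisat(1-\log\chisat)\le\tau$ bounds the entropy term by $M\tau$ and $k\log N=o(M)$. Choosing the threshold $\tau$ small enough (smaller than the constant hidden in $\tilde{M}/512$) makes this exponent $-\Omega(M)$, and on the complementary event $\sqrt{k}=o(\sqrt{\tilde{M}})=o(\sqrt{M})$ converts the block bounds into \eqref{eqn_thm3.5}--\eqref{eqn_thm3.6}.

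The main obstacle is precisely this data-dependence of the row selection in $\tp$: the honest way to handle it is the uniform bound over row subsets, and the price is the combinatorial factor $\binom{M}{\bar{M}}$, whose exponent is, up to lower-order terms, $M\chisat(1-\log\chisat)$. This is exactly why the hypothesis is phrased through $\chisat(1-\log\chisat)\le\tau$ and why $\tau$ must be taken sufficiently small: the entropy of the admissible row deletions must remain dominated by the $\Omega(M)$ gain coming from singular-value concentration on $\Omega(M)$ rows. Everything else is the routine Davidson--Szarek-plus-union-bound argument already used to prove \eqref{eqn_thm3.1}--\eqref{eqn_thm3.2}.
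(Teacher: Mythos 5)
Your proposal is correct and follows essentially the same route as the paper: single-block Gaussian singular-value concentration (your Davidson--Szarek-plus-Lipschitz-concentration bound is exactly what the paper invokes via Vershynin's Theorem~5.39), a union bound over the $\binom{N}{k}$ column subsets with $t=\sqrt{M}/16$ for \eqref{eqn_thm3.1}--\eqref{eqn_thm3.2}, and for \eqref{eqn_thm3.5}--\eqref{eqn_thm3.6} an additional union bound over the $\binom{M}{\bar M}$ row subsets, whose entropy $M\chisat(1-\log\chisat)\le\tau M$ is absorbed by the $\Omega(M)$ concentration gain once $\tau$ is small. Your explicit handling of the data-dependence of the unsaturated rows---bounding $\sqrt{\tilde\rho^{\pm}(k)}$ uniformly over all blocks $\Phi_{R,T}$---is precisely the device the paper uses, just stated there more tersely.
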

\begin{proof}
  From the definition of ${\rho}^+(k)$, we have
\[
\sqrt{{\rho}^+(k)}=\max_{|T|\leq k, T\subset \{1,2,...,
    N\}}\sigma_{\max}(\Phi_{T}),
\]
where $\sigma_{\max}(\Phi_T)$ is the maximal singular value of
$\Phi_T$. From \citet[Theorem~5.39]{Vershynin11}, we have for any
$t>0$ that
\[
\sigma_{\max}(\Phi_{T})\leq \sqrt{{M}}+O\left(\sqrt{k} \right)+t
\]
with probability larger than $1-O\left(\exp(-\Omega(t^2) \right)$. Since the
number of possible choices for $T$ is
\[
{N \choose k} \leq \left(\frac{eN}{k} \right)^k,
\label{eqn_thm3proof1}
\]
we have with probability at least
\[
1-{N \choose k} O \left(\exp{(-\Omega(t^2))} \right)\geq
1-O \left(\exp{(k\log{(eN/k)}}-\Omega(t^2) \right)
\]
that
\begin{align*}
\sqrt{{\rho}^+(k)} &= \max_{|T|\leq k, T\subset \{1,2,...,
N\}}\sigma_{\max}(\Phi_{T}) \leq  \sqrt{M}+O \left(\sqrt{k} \right)+t.
\end{align*}
Taking $t=\sqrt{M}/16$, and noting that $k = o(M)$, we obtain the
inequality~\eqref{eqn_thm3.1}, with probability at least
\begin{align*}
  & \quad 1-O(\exp{(k\log(eN/k)-\Omega(t^2))}\\
 &=1-O(\exp{(k\log(eN/k)-\Omega(M))}\\
 &=1-O(\exp{(o(M)-\Omega(M))}\\
  &\geq 1- O(\exp({-\Omega(M)}))
\end{align*}

The second inequality~\eqref{eqn_thm3.2} can be obtained similarly
from
\[
\min_{|T|\leq k, T\subset \{1,2,..., N\}}\sigma_{\min}(\Phi_{T})
\leq \sqrt{M}-O\left(\sqrt{k}\right)-t,
\]
where $\sigma_{\min}(\Phi_T)$ is the minimal singular value of
$\Phi_T$. (We set $t =\sqrt{M}/16$ as above.)

Next we prove~\eqref{eqn_thm3.5}. We have
\begin{align*}
\sqrt{\tilde{\rho}^+(k)} = &\max_{h, |T|\leq k}
\frac{\|\tilde{\Phi}_Th_T\|}{\|h_T\|} \leq \max_{|T|\leq k, |R|\leq
\tilde{M}}\sigma_{\max}(\Phi_{R,T}),
\end{align*}
where $R \subset \{1,2,\dotsc,M\}$ and $T\subset\{1,2,\dotsc,N\}$
are subsets of the row and column indices of $\Phi$, respectively,
and $\Phi_{R,T}$ is the submatrix of $\Phi$ consisting of rows in
$R$ and columns in $T$. We now apply the result in
\citet[Theorem~5.39]{Vershynin11} again: For any $t>0$, we have
\[
\sigma_{\max}(\Phi_{R,T})\leq \sqrt{{\tilde{M}}}+O\left(\sqrt{k}\right)+t
\]
with probability larger than $1-O(\exp{(-\Omega(t^2))})$. The number
of possible choices for $R$ is
\begin{align*}
{M\choose\bar{M}} \le \left( \frac{eM}{\bar{M}} \right)^{\bar{M}} &
= \left(\frac{e}{\chisat} \right)^{\chisat M} = \exp ( M \chisat
\log(e/\chisat) ) \le \exp (\tau M),
\end{align*}
so that the number of possible combinations for $(R,T)$ is bounded
as follows:
\[ {M\choose \bar{M}}{N \choose k} \le \exp \left(\tau
    M + k \log (eN/k)\right).
\]
We thus have
\begin{align*}
& \quad \mathbb{P}\left(\sqrt{\tilde{\rho}^+(k)} \leq \sqrt{{\tilde{M}}}+O\left(\sqrt{k}\right)+t\right)\\
&\geq \mathbb{P}\left(\max_{|R|\leq \tilde{M}, |T|\leq k}\sigma(\Phi_{R,T}) \leq \sqrt{{\tilde{M}}}+O\left(\sqrt{k}\right)+t \right)\\
&\geq 1-{M\choose \tilde{M}}{N \choose k} O(e^{-\Omega(t^2)})\\
&= 1-{M\choose \bar{M}}{N \choose k} O(e^{-\Omega(t^2)})\quad\text{(since $\bar{M}+\tilde{M} = M$)}\\
&= 1- O\left[\exp \left(\tau M + k\log ({eN}/{k})-\Omega(t^2)\right)
\right].
\end{align*}
Taking $t=\sqrt{\tilde{M}}/16$, and noting again that $k=o(M)$, we
obtain the inequality in \eqref{eqn_thm3.5}. Working further on the
probability bound, for this choice of $t$, we have
\begin{align*}
&\quad 1- O\left[\exp \left(\tau M + k\log ({eN}/{k})-\Omega(\tilde{M})\right) \right]\\
&= 1- O\left[\exp \left(\tau M + k\log ({eN}/{k})-\Omega(M)\right) \right]\\
&= 1 - O(\exp (-\Omega(M))),
\end{align*}
where the first equality follows from $\tilde{M}=(1-\chisat)M$ and
for the second equality we assume that $\tau$ is chosen small enough
to ensure that the $\Omega(M)$ term in the exponent dominates the
$\tau M$ term.

A similar procedure can be used to prove \eqref{eqn_thm3.6}.
\end{proof}

We conclude by deriving estimates of $\bar{C}_1(\tp)$,
$\bar{C}_2(\tp)$, and $\fmax$, that are used in the discussion at the
end of Section~\ref{theory}.

From Theorem~\ref{thm_rhobound}, we have that under assumptions (iii),
(iv), and (v), the quantity $A_1(\tp)$ defined in \eqref{eq:def.A1} is
bounded as follows:
\begin{align*}
\bar{A}_1(\tp) &= 4\left( \sqrt{\tilde{\rho}^+(3s)} +
\sqrt{\tilde{\rho}^-(3s)} \right)
 \left( \sqrt{\tilde{\rho}^+(3s)} - \sqrt{\tilde{\rho}^-(3s)} \right) \\
&\leq 4 \left( 2\sqrt{\tilde{M}}+o(\sqrt{M}) \right) \left({1\over 8}\sqrt{\tilde{M}}+o \left(\sqrt{M} \right) \right)\\
&= \tilde{M} + o(M) = \Omega(M).
\end{align*}
Using $s=l$, the quantity  $\bar{A}_0(\tp)$ defined in \eqref{eq:def.A0}
is bounded as follows:
\begin{align*}
\bar{A}_0(\tp)
&= \tilde{\rho}^- (2s) - \frac{3}{4} \bar{A}_1(\tilde\Phi)\\
&\ge {15\over 16}\tilde{M}-o(M) - {3\over 4}\tilde{M} - o(M)\\
&= {3\over 16}\tilde{M}-o(M)\\
&=  \Omega(M),
\end{align*}
for all sufficiently large dimensions and small saturation ratio
$\chisat$, since $\tilde{M} = (1-\chi) M$.  Using the estimates
above for $\bar{A}_0(\tp)$ and $\bar{A}_1(\tp)$,
in the definitions \eqref{eq:def.C1} and
\eqref{eq:def.C2}, we obtain
\[
\bar{C}_1(\tp)=4+\sqrt{10}{\bar{A}_1(\tp)}/{A_0(\tp)}=\Omega(1), \quad
\bar{C}_2(\tp)=\sqrt{10 / \bar{A}_0(\tp)}=\Omega \left(1/\sqrt{M} \right),
\]
as claimed. \sjwresolved{Had to fix some estimates here. {\rca I
have
    double checked it, but do not see any problems. Could you point it
    out?}  I fixed them already.}
Finally,  $\fmax$ can be estimated by
\[
\fmax=\sqrt{\tilde{\rho}^+(1)}\leq {17\over
16}\sqrt{\tilde{M}}+o(M) = O\left(\sqrt{M}\right).
\]



{
\bibliographystyle{model1b-num-names}
\bibliography{reference}

\begin{thebibliography}{25}
\expandafter\ifx\csname natexlab\endcsname\relax\def\natexlab#1{#1}\fi
\providecommand{\bibinfo}[2]{#2}
\ifx\xfnm\relax \def\xfnm[#1]{\unskip,\space#1}\fi
\bibitem[{Beck and Teboulle(2009)}]{Beck09}
\bibinfo{author}{A.~Beck}, \bibinfo{author}{M.~Teboulle}, \bibinfo{title}{A
  fast iterative shrinkage-thresholding algorithm for linear inverse problems},
  \bibinfo{journal}{SIAM Journal on Imaging Sciences} \bibinfo{volume}{2}
  (\bibinfo{year}{2009}) \bibinfo{pages}{183--202}.
\bibitem[{Bickel et~al.(2009)Bickel, Ritov and Tsybakov}]{Bickel09}
\bibinfo{author}{P.J. Bickel}, \bibinfo{author}{Y.~Ritov},
  \bibinfo{author}{A.~Tsybakov}, \bibinfo{title}{Simultaneous analysis of
  {Lasso} and {Dantzig} selector}, \bibinfo{journal}{Annals of Statistics}
  \bibinfo{volume}{4} (\bibinfo{year}{2009}) \bibinfo{pages}{1705--1732}.
\bibitem[{Boufounos and Baraniuk(2008)}]{Boufounos08}
\bibinfo{author}{P.T. Boufounos}, \bibinfo{author}{R.G. Baraniuk},
  \bibinfo{title}{1-bit compressive sensing}, \bibinfo{journal}{CISS}
  (\bibinfo{year}{2008}) \bibinfo{pages}{19--21}.
\bibitem[{Boyd et~al.(2011)Boyd, Parikh, Chu, Peleato and Eckstein}]{Boyd11}
\bibinfo{author}{S.~Boyd}, \bibinfo{author}{N.~Parikh},
  \bibinfo{author}{E.~Chu}, \bibinfo{author}{B.~Peleato},
  \bibinfo{author}{J.~Eckstein}, \bibinfo{title}{Distributed optimization and
  statistical learning via the alternating direction method of multipliers},
  \bibinfo{journal}{Foundations and Trends in Machine Learning}
  \bibinfo{volume}{3} (\bibinfo{year}{2011}) \bibinfo{pages}{1--122}.
\bibitem[{Cand{\`e}s et~al.(2006)Cand{\`e}s, Romberg and Tao}]{Candes06}
\bibinfo{author}{E.~Cand{\`e}s}, \bibinfo{author}{J.~Romberg},
  \bibinfo{author}{T.~Tao}, \bibinfo{title}{Stable signal recovery from
  incomplete and inaccurate measurements}, \bibinfo{journal}{Comm. Pure Appl.
  Math.} \bibinfo{volume}{59} (\bibinfo{year}{2006})
  \bibinfo{pages}{1207--1223}.
\bibitem[{Cand{\`e}s(2008)}]{Candes08}
\bibinfo{author}{E.J. Cand{\`e}s}, \bibinfo{title}{The restricted isometry
  property and its implications for compressive sensing}, \bibinfo{journal}{C.
  R. Acad. Sci. Paris, Ser. I} \bibinfo{volume}{346} (\bibinfo{year}{2008})
  \bibinfo{pages}{589--592}.
\bibitem[{Cand{\`e}s and Tao(2007)}]{Candes07a}
\bibinfo{author}{E.J. Cand{\`e}s}, \bibinfo{author}{T.~Tao},
  \bibinfo{title}{The {D}antzig selector: Statistical estimation when $p$ is
  much larger than $n$}, \bibinfo{journal}{Annals of Statistics}
  \bibinfo{volume}{35} (\bibinfo{year}{2007}) \bibinfo{pages}{2392--2404}.
\bibitem[{Dai et~al.(2011)Dai, Pham and Milenkovic}]{Dai11}
\bibinfo{author}{W.~Dai}, \bibinfo{author}{H.V. Pham},
  \bibinfo{author}{O.~Milenkovic}, \bibinfo{title}{Quantized Compressive
  Sensing}, \bibinfo{type}{Technical Report}, Department of Electrical and
  Computer Engineering, University of Illinois at Urbana-Champaign,
  \bibinfo{year}{2011}.
\bibitem[{Duarte et~al.(2008)Duarte, Davenport, Takhar, Laska, Sun, Kelly and
  Baraniuk}]{Duarte08}
\bibinfo{author}{M.F. Duarte}, \bibinfo{author}{M.A. Davenport},
  \bibinfo{author}{D.~Takhar}, \bibinfo{author}{J.N. Laska},
  \bibinfo{author}{T.~Sun}, \bibinfo{author}{K.F. Kelly}, \bibinfo{author}{R.G.
  Baraniuk}, \bibinfo{title}{Single-pixel imaging via compressive sampling},
  \bibinfo{journal}{IEEE Signal Processing Magazine} \bibinfo{volume}{25}
  (\bibinfo{year}{2008}) \bibinfo{pages}{83--91}.
\bibitem[{Eckstein and Bertsekas(1992)}]{Eckstein92}
\bibinfo{author}{J.~Eckstein}, \bibinfo{author}{D.P. Bertsekas},
  \bibinfo{title}{On the {Douglas-Rachford} splitting method and the proximal
  point algorithm for maximal monotone operators},
  \bibinfo{journal}{Mathematical Programming} \bibinfo{volume}{55}
  (\bibinfo{year}{1992}) \bibinfo{pages}{293--318}.
\bibitem[{Gupta et~al.(2010)Gupta, Nowak and Recht}]{Gupta10}
\bibinfo{author}{A.~Gupta}, \bibinfo{author}{R.~Nowak},
  \bibinfo{author}{B.~Recht}, \bibinfo{title}{Sample complexity for 1-bit
  compressed sensing and sparse classification}, \bibinfo{journal}{ISIT}
  (\bibinfo{year}{2010}).
\bibitem[{He et~al.(2000)He, Yang and Wang}]{He00}
\bibinfo{author}{B.S. He}, \bibinfo{author}{H.~Yang}, \bibinfo{author}{S.L.
  Wang}, \bibinfo{title}{Alternating direction method with self- adaptive
  penalty parameters for monotone variational inequalities},
  \bibinfo{journal}{Journal of Optimization Theory and Applications}
  \bibinfo{volume}{106} (\bibinfo{year}{2000}) \bibinfo{pages}{337--356}.
\bibitem[{Jacques et~al.(2011)Jacques, Hammond and Fadili}]{JacquesHF11}
\bibinfo{author}{L.~Jacques}, \bibinfo{author}{D.K. Hammond},
  \bibinfo{author}{M.J. Fadili}, \bibinfo{title}{Dequantizing compressed
  sensing: When oversampling and non-gaussian constraints combine},
  \bibinfo{journal}{IEEE Transactions on Information Theory}
  \bibinfo{volume}{57} (\bibinfo{year}{2011}) \bibinfo{pages}{559--571}.
\bibitem[{Laska et~al.(2011)Laska, Boufounos, Davenport and Baraniuk}]{Laska11}
\bibinfo{author}{J.N. Laska}, \bibinfo{author}{P.T. Boufounos},
  \bibinfo{author}{M.A. Davenport}, \bibinfo{author}{R.G. Baraniuk},
  \bibinfo{title}{Democracy in action: Quantization, saturation, and
  compressive sensing}, \bibinfo{journal}{Applied and Computational Harmonic
  Analysis} \bibinfo{volume}{39} (\bibinfo{year}{2011})
  \bibinfo{pages}{429--443}.
\bibitem[{Laska et~al.(2007)Laska, Kirolos, Duarte, Ragheb, Baraniuk and
  Massoud}]{Laska07}
\bibinfo{author}{J.N. Laska}, \bibinfo{author}{S.~Kirolos},
  \bibinfo{author}{M.F. Duarte}, \bibinfo{author}{T.~Ragheb},
  \bibinfo{author}{R.G. Baraniuk}, \bibinfo{author}{Y.~Massoud},
  \bibinfo{title}{Theory and implementation of an analog-to-information
  converter using random demodulation}, \bibinfo{journal}{ISCAS}
  (\bibinfo{year}{2007}) \bibinfo{pages}{1959--1962}.
\bibitem[{Liu et~al.(2010)Liu, Wonka and Ye}]{Liu10}
\bibinfo{author}{J.~Liu}, \bibinfo{author}{P.~Wonka}, \bibinfo{author}{J.~Ye},
  \bibinfo{title}{Multi-stage {Dantzig} selector}, \bibinfo{journal}{NIPS}
  (\bibinfo{year}{2010}) \bibinfo{pages}{1450--1458}.
\bibitem[{Liu et~al.(2012)Liu, Wonka and Ye}]{Liu12JMLR}
\bibinfo{author}{J.~Liu}, \bibinfo{author}{P.~Wonka}, \bibinfo{author}{J.~Ye},
  \bibinfo{title}{A multi-stage framework for dantzig selector and lasso},
  \bibinfo{journal}{Journal of Machine Learning Research} \bibinfo{volume}{13}
  (\bibinfo{year}{2012}) \bibinfo{pages}{1189--1219}.
\bibitem[{Nesterov(2007)}]{Nesterov07}
\bibinfo{author}{Y.~Nesterov}, \bibinfo{title}{Gradient methods for minimizing
  composite objective function}, \bibinfo{type}{CORE Discussion Papers}
  \bibinfo{number}{2007076}, Universit catholique de Louvain, Center for
  Operations Research and Econometrics (CORE), \bibinfo{year}{2007}.
\bibitem[{Romberg(2009)}]{Romberg09}
\bibinfo{author}{J.K. Romberg}, \bibinfo{title}{Compressive sensing by random
  convolution}, \bibinfo{journal}{SIAM J. Imaging Sciences} \bibinfo{volume}{2}
  (\bibinfo{year}{2009}) \bibinfo{pages}{1098--1128}.
\bibitem[{Tropp et~al.(2009)Tropp, Laska, Duarte, Romberg and
  Baraniuk}]{Tropp09}
\bibinfo{author}{J.A. Tropp}, \bibinfo{author}{J.N. Laska},
  \bibinfo{author}{M.F. Duarte}, \bibinfo{author}{J.K. Romberg},
  \bibinfo{author}{R.G. Baraniuk}, \bibinfo{title}{Beyond {Nyquist}: Efficient
  sampling of sparse bandlimited signals}, \bibinfo{journal}{CoRR}
  \bibinfo{volume}{abs/0902.0026} (\bibinfo{year}{2009}).
\bibitem[{Tropp et~al.(2006)Tropp, Wakin, Duarte, Baron and Baraniuk}]{Tropp06}
\bibinfo{author}{J.A. Tropp}, \bibinfo{author}{M.B. Wakin},
  \bibinfo{author}{M.F. Duarte}, \bibinfo{author}{D.~Baron},
  \bibinfo{author}{R.G. Baraniuk}, \bibinfo{title}{Random filters for
  compressive sampling and reconstruction}, \bibinfo{journal}{ICASSP}
  \bibinfo{volume}{3} (\bibinfo{year}{2006}) \bibinfo{pages}{872--875}.
\bibitem[{Vershynin(2011)}]{Vershynin11}
\bibinfo{author}{R.~Vershynin}, \bibinfo{title}{Introduction to the
  non-asymptotic analysis of random matrices},
  \bibinfo{journal}{arXiv:1011.3027}  (\bibinfo{year}{2011}).
\bibitem[{Wright et~al.(2009)Wright, Nowak and Figueiredo}]{Wright09}
\bibinfo{author}{S.J. Wright}, \bibinfo{author}{R.D. Nowak},
  \bibinfo{author}{M.A.T. Figueiredo}, \bibinfo{title}{Sparse reconstruction by
  separable approximation}, \bibinfo{journal}{IEEE Transactions on Signal
  Processing} \bibinfo{volume}{57} (\bibinfo{year}{2009})
  \bibinfo{pages}{2479--2493}.
\bibitem[{Zhang(2009)}]{Zhang09a}
\bibinfo{author}{T.~Zhang}, \bibinfo{title}{Some sharp performance bounds for
  least squares regression with $l_1$ regularization}, \bibinfo{journal}{Annals
  of Statistics} \bibinfo{volume}{37} (\bibinfo{year}{2009})
  \bibinfo{pages}{2109--2114}.
\bibitem[{Zymnis et~al.(2010)Zymnis, Boyd and Cand{\`e}s}]{Zymnis10}
\bibinfo{author}{A.~Zymnis}, \bibinfo{author}{S.~Boyd}, \bibinfo{author}{E.J.
  Cand{\`e}s}, \bibinfo{title}{Compressed sensing with quantized measurements},
  \bibinfo{journal}{Signal Processing Letters}  (\bibinfo{year}{2010})
  \bibinfo{pages}{149--152}.

\end{thebibliography}
}




\end{document}